\documentclass{article}

\usepackage[preprint]{neurips_2026}

\usepackage[utf8]{inputenc} 
\usepackage[T1]{fontenc}    
\usepackage{hyperref}       
\usepackage{url}            
\usepackage{booktabs}       
\usepackage{amsfonts}       
\usepackage{nicefrac}       
\usepackage{microtype}      
\usepackage{xcolor}         
\usepackage{amsmath,amssymb,amsthm}
 \usepackage{graphicx}
\newtheorem{theorem}{Theorem}[section]
\newtheorem{proposition}[theorem]{Proposition}
\newtheorem{corollary}[theorem]{Corollary}
\newtheorem{lemma}[theorem]{Lemma}

\theoremstyle{definition}
\newtheorem{definition}[theorem]{Definition}

\title{How Hard is it to Rig a Benchmark? A Social Choice Analysis of Leaderboard Robustness}

\author{%
  Polina ~Gordienko\\
  Department of Statistics, LMU Munich \\
  \texttt{Polina.Gordienko@stat.uni-muenchen.de} \\
  \And
  Georg ~Schollmeyer \\
Department of Statistics, LMU Munich \\
  \texttt{Polina.Gordienko@stat.uni-muenchen.de} \\
  \AND
  Frauke ~Kreuter \\
  Social Data Science Center, University of Maryland \\
  Department of Statistics, LMU Munich
  \And
  Christoph ~Jansen \\
  School of Computing \& Communications, Lancaster University Leipzig \\
}

\begin{document}

\maketitle

\begin{abstract}
Multi-task benchmarks have become a central pillar of machine learning research, yet their growing influence has incentivised benchmark gaming -- strategic actions taken to improve the leaderboard rank of a specific model. Treating datasets as voters and models as candidates, we consider \textit{benchmark-specific training} -- the inclusion of benchmark data in training -- as a form of election manipulation. For any ordinal benchmark, the problem of choosing datasets to train on so that a target model becomes top-ranked corresponds to shift bribery, a class of manipulation problems from computational social choice. Leveraging this identification, we show that the benchmark-specific training problem is NP-hard under Borda count and mean win rate. Complementing this worst-case perspective, we introduce the \textit{instance-level robustness}, the minimum number of datasets a model developer must include in training to top a given leaderboard, and derive expressions for it under arithmetic mean, median, mean win rate and pairwise majority. We evaluate these expressions on MMLU under HELM and on BIG-Bench Hard (BBH) under the Open LLM Leaderboard. Across both suites, mean win rate is hardest to manipulate: this gap is clear on BBH (24 tasks, 4507 models), where its median robustness is 22 tasks (92\%), compared with 13 (54\%) under arithmetic mean and 12 (50\%) under median and pairwise majority.\footnote{The code for reproducing all experiments is available at \url{https://anonymous.4open.science/r/How-Hard-is-it-to-Rig-a-Benchmark-A-Social-Choice-Analysis-of-Leaderboard-Robustness-C9D0}.}
\end{abstract}

\section{Introduction}\label{intro}

Benchmarks are the compass of modern machine learning \citep{hardt2025emerging}. Multi-task suites such as GLUE \citep{wang2018}, SuperGLUE \citep{wang2019}, MMLU \citep{Hendrycks2021}, BIG-Bench \citep{Srivastava2022BeyondTI} and HELM \citep{liang2023} have become the primary mechanism by which the community decides which models are the best. As leaderboard rank correlates with commercial adoption \citep{Chang24}, the incentives surrounding benchmark performance have increased substantially. The result is a mere ``illusion of progress'' \citep{dehghani21}: models that perform well on benchmark tasks but fail on simple challenge examples and falter in real-world scenarios \citep{kiela2021}. 

We use \textit{benchmark gaming} as an umbrella term for strategic actions aimed at improving leaderboard position of a specific model. The most direct form is the inclusion of benchmark data in training, which has been studied as \textit{training on the test set} \citep{DudaHart, hardtrecht2022patterns}, \textit{data contamination} and \textit{benchmark leakage} \citep{sainz-etal-2023-nlp, ni2025surveylargelanguagemodel, Ni25}. Although such practices have been documented among frontier models \citep{singh2025leaderboardillusion} and motivated many detection and decontamination methods \citep{yang2023rethinkingbenchmarkcontaminationlanguage,jiang2024investigatingdatacontaminationpretraining}, their extent in any given suite is hard to verify, since the training data of leading closed-source models is proprietary \citep{zhou2023dontmakellmevaluation}. Instead of asking whether a benchmark is contaminated and how to detect it, we examine the robustness of the benchmark to such manipulation by construction. \textbf{For a fixed suite of datasets and a fixed set of competing models, we ask how many tasks a model developer must include in training to make a target model top-ranked.}

Treating datasets as voters and models as candidates, a multi-task benchmark is a social choice problem and \textit{benchmark-specific training} -- the deliberate inclusion of evaluation datasets in the training of a given model -- is a form of election manipulation. We show that it corresponds to \textit{shift bribery} \citep{Elkind2009, FALISZEWSKI2021}, where an external agent pays a cost to shift a preferred candidate upward in voters' rankings. The question ``how robust is the benchmark?'' then reduces to ``how hard is it to bribe the corresponding election?''. We give two answers. In the worst case, no efficient algorithm can find the cheapest way to bribe the benchmark. For any specific suite, we compute exactly how many datasets a developer must train on to top the leaderboard.

\begin{figure}
\centering
\includegraphics[width=\linewidth]{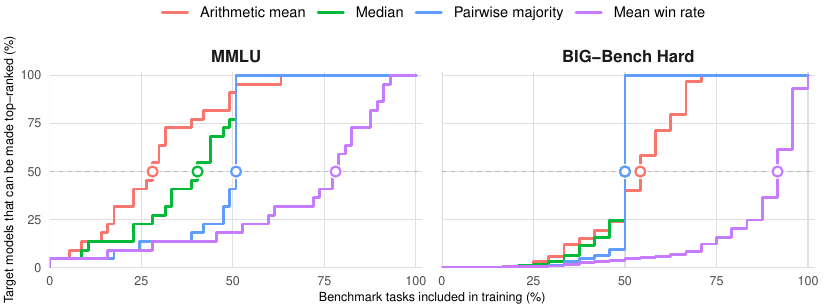}
\caption{How many benchmark datasets must be included in training to top the leaderboard? Treating each model in turn as the target, we compute the minimum fraction of tasks on which benchmark-specific training would make the target top-ranked and plot the empirical CDF across all targets. Curves farther to the right indicate greater robustness. Dots mark the median robustness for each rule.}
\label{ecfd}
\end{figure}

\textbf{Contributions.} We make the following three contributions.

1. We formalize multi-task benchmarking as a preference aggregation problem and benchmark-specific training as a manipulation of it and prove that for any ordinal benchmark operator the resulting problem is exactly shift bribery with \textit{all-or-nothing prices}. As a consequence, benchmark-specific training is NP-hard under Borda count and mean win rate.

2. We introduce \textit{instance-level robustness}, the minimum number of datasets a developer must train on to top a leaderboard, and derive expressions for mean, median, mean win rate and pairwise majority. 

3. We evaluate these expressions on MMLU under HELM (22 models, 57 subjects) and on BIG-Bench Hard under the Open LLM Leaderboard (4507 models, 24 tasks). Mean win rate is consistently the hardest to manipulate, with median robustness of $44.5$ subjects ($78\%$) on MMLU and $22$ tasks ($92\%$) on BBH, against $16$ ($28\%$) and $13$ ($54\%$) under arithmetic mean.

\section{Related Work}\label{lit}

\textbf{Benchmarking and social choice.} The machine learning community has recently turned to social choice theory to address the aggregation problem in benchmarks. The starting point for much of this work has been the critique of mean aggregation. Averaging methods are highly inadequate for benchmarking and may result in misleading leaderboards \citep{ethayarajh20,agarwal21, mishra2021}. \cite{colombo2022bestsystemsnewperspectives, himmi2023robustnlpevaluationhandling, Rofin_2023} investigate the use of voting rules such as Borda, Minimax and Kemeny consensus. \cite{ehl2012, mptbw2015} study benchmarking as a consensus-ranking problem.  \cite{zhang2024inherent} use Arrow's theorem to illustrate a trade-off between diversity and sensitivity in multi-task benchmarks. \cite{gordienko2026arrowimpossibilitypossibilitiesmulticriteria} frame benchmarking as social choice and characterize when aggregation satisfies all Arrow's axioms except for universality.  \cite{lanctot2025evaluatingagentsusingsocial} suggest a framework to evaluate general agents based on concepts from social choice and game theory.

\textbf{Benchmark gaming: contamination, leakage, training on the test task.} A substantial  body of literature studies contamination and leakage, proposing detection and decontamination methods \citep{magar22, roberts2023datacontaminationlenstime, sainz-etal-2023-nlp,yang2023rethinkingbenchmarkcontaminationlanguage,dong-etal-2024, deng-etal-2024-investigating,jiang2024investigatingdatacontaminationpretraining, Ni25}. \cite{kapoor2022leakagereproducibilitycrisismlbased, zhou2023dontmakellmevaluation} document the far-reaching consequences of leakage that go beyond mere overfitting, including reduced adaptation capability, reproducibility failures and unfair advantages in evaluation results. Yet, these practices raise a more fundamental question: what are we justified in concluding about model capabilities from benchmark scores? \cite{freiesleben2025benchmarkingepistemologyconstructvalidity} argue that benchmark results alone measure at best model performance relative to a concrete evaluation dataset and learning problem and that stronger claims require additional validity assumptions. \cite{diddee2026benchbrowsercollectingevidence} show that suites may capture only narrow aspects of a capability and that benchmarks which appear to measure the same capability need not induce the same model rankings. \cite{singh2025leaderboardillusion} give a detailed empirical account of systematic gaming of the Chatbot Arena rating. Distinct from contamination and leakage, \cite{dominguez2025} introduce \textit{training on the test task}, the practice of using knowledge about evaluation tasks during training -- e.g., through instruction-tuning data -- without the training set itself containing test instances.

\textbf{Formal perspectives on gaming.} \cite{blum15} focus on leaderboards in machine learning competitions and propose \textit{the Ladder} mechanism for reliable evaluation. \cite{dwork15} develop a framework that enables adaptive validation while avoiding overfitting to the holdout set. \cite{hardtmegiddo15} deal with strategic classification, where individuals can modify their features in response to a published classifier in order to obtain a more favorable outcome. \cite{chen2026leaderboardincentivesmodelrankings} model benchmarking as a Stackelberg game between a benchmark designer who determines an evaluation protocol and competing model developers who can increase their models' scores by investing in benchmark-specific improvements. In this paper, we take the complementary perspective and ask how robust a given aggregation rule is to manipulation by a model developer.

\section{Formalizing Benchmark Manipulation}\label{framework}
\subsection{Manipulation in Social Choice}
Social choice theory studies how individual inputs -- preferences, judgements or probabilistic beliefs -- are aggregated into a collective output through an \textit{aggregation rule} \citep{sep-social-choice}. A central concern is whether such a rule can be manipulated by a voter misreporting their preferences or by an external agent who \textit{bribes} the voters to change their votes. A famous result of \cite{gibbard} and \cite{ SATTERTHWAITE1975187} shows that once there are at least three alternatives, manipulation is unavoidable for broad classes of aggregation rules, except in degenerate cases when the outcome is determined by a single voter (a \textit{dictator}). Although the Gibbard-Satterthwaite theorem implies that no reasonable aggregation rule is immune to manipulation, it says nothing about how \textit{difficult} manipulation is to carry out. \cite{Bartholdi89} observe that the real threat of manipulation for a given aggregation rule arises only if it is computationally easy to find the optimal action that enables the desirable outcome. A prominent subfield of computational social choice has since investigated the complexity of manipulating different aggregation rules under different assumptions \citep{BARTHOLDI199227,Faliszewski2006, Faliszewski2009, Faliszewski17, FALISZEWSKI2021,Faliszewski_Rothe_2016,Elkind2009, Elkind2020}. 

\subsection{Benchmark-Specific Training}\label{BSTform}

We formalize multi-task benchmarking as a social choice problem where models are alternatives/candidates and tasks are voters. We show that \textit{benchmark-specific training} -- the deliberate inclusion of benchmark datasets in the training process -- is a manipulation of this aggregation.

\textbf{Preference aggregation problem.} Let $\mathcal{D}$ be some fixed and finite universe of datasets/tasks with $m:=|\mathcal{D}|\in \mathbb{N}$ and let $\mathcal{A}$ be some fixed and finite set of models with $n:=|\mathcal{A}|\in \mathbb{N}$.  Let $\phi$ denote an evaluation metric with $\phi: \mathcal{A}\times \mathcal{D} \to [0,1]$. We assume that higher values of metric $\phi$ mean better performance on that metric. Let pref$(\mathcal{A})$ be the set of all complete and transitive binary relations, i.e., \textit{preference relations} on $\mathcal{A}$. Each fixed dataset $D \in \mathcal{D}$ induces a preference relation $\succeq_{D} \in$ pref$(\mathcal{A})$, defined by setting $A \succeq_{D} A^{'} :\Leftrightarrow \phi(A,D) \geq \phi(A^{'},D).$ We denote by $\succ_{D}$ its strict part. Collecting one preference relation for each dataset yields the \textit{profile} $R:=(\succeq_{D})_{D \in \mathcal{D}}.$ A \textit{benchmark operator} is a mapping $B: [0,1]^{n \times m} \to \text{pref}(\mathcal{A}).$ For a fixed $\mathcal{A}$ and a fixed $\mathcal{D}$, $\phi$ determines an $n\times m$ score matrix recording each model's metric value on each dataset; the operator $B$ takes this matrix as input and returns an aggregate preference ordering over all models in $\mathcal{A}$. We write $B(\phi)$ for $B(\phi(A,D)_{A \in \mathcal{A}, D \in \mathcal{D}})$, the leaderboard the benchmark operator $B$ produces when scores are assigned by $\phi$.  We call $B$ \textit{ordinal} if there exists a mapping  $\tilde{B}: \text{pref}(\mathcal{A})^{m} \to \text{pref}(\mathcal{A})$ such that for every $\phi$, $B(\phi)=\tilde{B}(R),$ where $R$ is the profile induced by $\phi$. An ordinal operator uses only the rankings induced by datasets (i.e., which model beats which on each dataset), not the magnitude of differences in values of $\phi$. An operator that is not ordinal is called \textit{cardinal}.

\textbf{Benchmark-specific training.} We begin by introducing a \textit{default evaluation protocol} $P_0$ -- the benchmark's official evaluation procedure, which is fixed independently of the training of any individual model and applied identically to every model in the benchmark. In practice, $P_0$ can be the evaluation harness, e.g., the Eleuther AI Evaluation Harness \citep{eval-harness} used by the Open LLM Leaderboard. Evaluating a model $A\in \mathcal{A}$ on $D \in \mathcal{D}$ under $P_0$ yields $\phi^0 (A,D) \in  [0,1]$. We write $R^0:= (\succeq^{0}_D)_{D \in \mathcal{D}}$ for the resulting profile. Let $A_{1} \in \mathcal{A}$ be a target model under development. The model developer chooses a subset $S\subseteq \mathcal{D}$ to include in the training of $A_{1}$. Since tasks differ in size, complexity and the compute required to train on them, we assign each $D \in \mathcal{D}$ a fixed manipulation cost $c_D>0$. Training on $S$ induces a new metric function: for each $S\subseteq \mathcal{D}$, let  $\phi^T: \mathcal{A}\times \mathcal{D} \to [0,1]$
denote values of $\phi$ assigned to every $A \in \mathcal{A}$ evaluated on every $D \in \mathcal{D}$ such that
\begin{itemize}
    \item  only the target model's performance is affected: $\phi^{0}(A, D)=\phi^{T} (A,D) \ \forall A\neq A_1, \forall D \in \mathcal{D}$;
    \item performance of the target model cannot be decreased: $\phi^{T}(A_1, D)\geq \phi^{0} (A_1,D) \ \forall D \in S$;
    \item performance on remaining datasets is unchanged: $\phi^{0}(A_1, D)=\phi^{T} (A_1,D) \ \forall D \notin S$.
\end{itemize}  

The model developer controls exactly two things: \textit{which} datasets to train on and \textit{how hard} to push the target model's performance on each chosen dataset. For each dataset $D \in \mathcal{D}$, let $g_D:=\phi^T(A_1,D) - \phi^0 (A_1,D)$ denote the realized improvement (gain) in metric $\phi$'s value with $g_D=0$ for $D \notin  S$ and $0\leq g_D \leq G_D \ \text{for all} \ D \in  S,$ where  $G_D \in  [0,1-\phi^0(A_1,D)]$  is the maximal gain attainable on $D$. For ordinal operators, we assume \textit{monotonicity}: moving $A_1$ upward in any dataset ranking does not lower its position in the aggregate ranking. For such operators, it is always optimal to achieve maximal gain on every chosen dataset: increasing $g_D$ can only improve the target model's ranking on $D$ and hence, by monotonicity, its aggregate position. This is why we can assume $g_D=G_D$ for all $D \in S$ without loss of generality. Each dataset $D \in \mathcal{D}$ induces the post-training preference relation $\succeq^{T}_D \in$ pref$(\mathcal{A})$. We write $R^T:= (\succeq^{T}_D)_{D \in \mathcal{D}}$ for the resulting profile. We further assume $G_D$ is large enough that the ranking $\succeq^{T}_D$ under $g_D=G_D$ makes $A_1$ top-ranked on $D$. \footnote{This assumption allows the problem in Definition~\ref{BST} to correspond exactly to the shift bribery with all-or-nothing prices \citep{BREDERECK2016140}. Without it, training on $D$ pushes $A_1$ only as far as $G_D$ allows. The identification with shift bribery, Corollary~\ref{borda} and robustness expressions for arithmetic mean, median and mean win rate continue to hold without it; the expressions for pairwise majority in Section~\ref{maj} require it.}

\begin{definition}\label{BST}
The \textbf{Benchmark-Specific Training Problem (BST)} asks, given $\mathcal{A}$, $\mathcal{D}$, a target model $A_1 \in \mathcal{A}$, a default metric $\phi^0$, maximal gains $(G_D)_{D \in \mathcal{D}}$, an operator $B$, costs $(c_D)_{D \in \mathcal{D}}$ and a budget $\beta \in \mathbb{N}$ whether there exists $S\subseteq \mathcal{D}$ with  $\sum_{D \in S}c_D \leq \beta$ such that $A_{1}$ is a top element of $B(\phi^T)$.
\end{definition}

\subsection{Bribing the Benchmark}\label{bribe}

To analyze the computational complexity of Definition~\ref{BST}, we show that it corresponds to a class of \textit{bribery} problems -- a type of election manipulation problem \citep{Faliszewski2006, Faliszewski2009}.

\textbf{Elections and bribery.} An election consists of a finite set of candidates $C$ and a finite set of voters $V$, where each voter has a preference ordering over $C$. An election rule $E$ aggregates the profile of preference relations into a collective ranking of $C$. The top-ranked candidate in the collective ranking is the winner of the election. In the classical bribery problem \citep{Faliszewski2006, Faliszewski2009}, an external agent has a preferred candidate $p \in C$ and a budget $\beta$; she may pay voters to change their preferences and asks whether $p$ can be made the winner under $E$ while spending at most $\beta$. There are different variants of the bribery problem, each of those imposing different restrictions on how voters may be bribed and on how the budget $\beta$ looks like. In \textit{shift bribery} \citep{Elkind2009, BREDERECK2016140, FALISZEWSKI2021}, the agent can only shift $p$ upward in a bribed voter's ranking, while the relative ordering of the remaining candidates remains unchanged. We use the \textit{all-or-nothing} pricing variant of shift bribery \citep{BREDERECK2016140}[Sec. 3.3]: for each voter the agent either pays nothing and the voter's preferences stay untouched or she pays a fixed cost $c$ which can be different for each voter. Under any monotone election rule $E$, it is optimal to shift $p$ to the top of the bribed voter's ranking, since the cost $c$ remains the same regardless of the shift amount.

\textbf{From benchmark-specific training to shift bribery.} The BST problem is structurally identical to shift bribery with all-or-nothing prices. Each task is a voter, each model is a candidate. The model $A_1$ is the preferred candidate $p$ and training $A_1$ on a dataset $D$ corresponds to bribing voter $D$ at cost $c_D$. Benchmark-specific training moves $A_1$ upward in the rankings of those tasks that were included in training, while leaving all other models in the same relative order -- which is exactly the effect of a bribed voter shifting the preferred candidate up. We furthermore restrict our attention to ordinal benchmark operators $\tilde{B}$, where it is always optimal to push $A_1$ to the top of each chosen task's ranking. Hence, the price for each task included in benchmark-specific training is either zero (task not chosen) or a fixed $c_D$ (task chosen, $A_1$ shifted to the top), which is precisely the all-or-nothing pricing variant of shift bribery. We also apply a fixed deterministic tie-breaking rule and, as a consequence, assume that each task induces a strict linear order $\succ_{D}$ on $\mathcal{A}$ for the remainder of this section.

\begin{proposition}\label{iso}
For any monotone ordinal benchmark operator $\tilde{B}$, the BST problem is exactly the shift bribery problem with all-or-nothing prices under the identification of datasets $\mathcal{D}$ with voters $V$, models $\mathcal{A}$ with candidates $C$, the model $A_1$ with the preferred candidate $p$ and the operator $\tilde{B}$ with the election rule $E$.\footnote{The proof of this and the following formal results can be found in Appendix~\ref{proofs}.} 
\end{proposition}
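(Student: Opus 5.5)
The plan is to exhibit an explicit, budget-preserving correspondence between instances and feasible solutions of the two problems, and to check that it preserves the yes/no answer in both directions. Given a BST instance $(\mathcal{A},\mathcal{D},A_1,\phi^0,(G_D),\tilde{B},(c_D),\beta)$, I build the election with $C=\mathcal{A}$, one voter $v_D$ per $D\in\mathcal{D}$ whose vote is the strict order $\succ^0_D$ (after the fixed tie-breaking), preferred candidate $p=A_1$, rule $E=\tilde{B}$, all-or-nothing price $c_D$ for voter $v_D$, and budget $\beta$. Conversely, every shift-bribery instance with all-or-nothing prices arises this way. To see this, I realise each vote $\succ_v$ by a metric $\phi^0(\cdot,D_v)$ that assigns strictly decreasing values in $[0,1)$ along $\succ_v$. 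I choose the value of $p$ below $1$ and set $G_{D_v}:=1-\phi^0(p,D_v)$, so the standing assumption that maximal gain makes $A_1$ top-ranked holds. Since $\tilde{B}$ is ordinal, the election outcome coincides with $B(\phi^0)$. I also fix the convention that top-ranked is resolved by the same tie-breaking in both problems.

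The core step is a lemma about profiles. For any $S\subseteq\mathcal{D}$, the profile $R^T$ induced by training on $S$ with $g_D=G_D$ equals the profile obtained from $R^0$ by shifting $p$ to the top in exactly the votes $v_D$ with $D\in S$. I verify this coordinatewise. For $D\notin S$, all scores are unchanged by the first and third bullet conditions, so $\succ^T_D=\succ^0_D$. For $D\in S$, only $A_1$'s score changes, so the relative order of all $A\neq A_1$ is preserved, and by the assumption on $G_D$, $A_1$ is strictly above everyone. This is exactly the shift of $p$ to the top. By ordinality, $B(\phi^T)=\tilde{B}(R^T)$, so ``$A_1$ is a top element of $B(\phi^T)$'' is the same as ``$p$ wins after bribing $\{v_D:D\in S\}$''. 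The costs coincide as well, since $\sum_{D\in S}c_D$ is the all-or-nothing bribery cost.

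It remains to reconcile the choice sets, which I expect to be the main obstacle. A shift briber may pay $c_D$ and shift $p$ only partway, and the developer might likewise choose some $g_D<G_D$. I handle both by the monotonicity of $\tilde{B}$. Given any successful bribery, I replace each partial shift by a full shift to the top at the same cost. Each vote then places $p$ weakly higher, so by monotonicity applied one voter at a time, $p$ remains a top element. This yields a successful bribery of the canonical form, which by the lemma corresponds to a BST solution $S$ with the same cost. Conversely, a BST solution with arbitrary gains $g_D\le G_D$ can be raised to $g_D=G_D$ without hurting $A_1$, again by monotonicity, and then maps to a bribery. Care is needed with ties in the aggregate ranking; I would state monotonicity as ``if $A_1$ is a top element before the upward shift, it is one after'' to cover weak orders. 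Hence the two instances have the same answer, and the correspondence is computable in polynomial time in both directions. This establishes that BST is exactly shift bribery with all-or-nothing prices.
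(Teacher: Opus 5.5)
Your proposal is correct and follows essentially the same route as the paper's proof: you show that training on $S$ with $g_D=G_D$ yields exactly the profile obtained by shifting $A_1$ to the top of the votes in $S$, use monotonicity of $\tilde{B}$ to replace partial shifts by full shifts at the same all-or-nothing price, and match the costs $\sum_{D\in S}c_D$. Your extra step realizes an arbitrary shift-bribery instance as a BST instance, using strictly decreasing scores in $[0,1)$ and $G_D=1-\phi^0(A_1,D)$; the paper leaves this implicit, but it is precisely what the hardness transfer in Corollary~\ref{borda} requires, so it is worth keeping.
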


Now we focus on the ordinal operator $B_\text{Borda}$ \citep{borda1781}: on each $D \in \mathcal{D}$, models receive points according to their position in the ranking $\succ_{D}$: $n-1$ points for the top-ranked model, down to zero points for the lowest-ranked model. A model's Borda score is the sum of these points over all datasets. The operator $B_\text{Borda}$ ranks models in decreasing order of Borda score. Although Borda count is used directly in some leaderboards, most notably MTEB \citep{Chung2025MaintainingMT}, it is especially relevant here because under strict preference relations $\succ_{D}$, it induces the same overall ranking as mean win rate, a widely used rule in benchmarking practice \citep{liang2023,hardt2025emerging}.

\begin{corollary}\label{borda}
The BST problem is NP-hard under $B_\text{Borda}$.
\end{corollary}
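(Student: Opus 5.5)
The plan is to reduce from a known NP-hard problem via Proposition~\ref{iso}. Borda count is an ordinal operator, since it depends only on each dataset's position ranking $\succ_D$. It is also monotone: moving $A_1$ up in some $\succ_D$ raises $A_1$'s points on $D$ by the number of positions gained. Each model it passes loses exactly one point on $D$, and the scores of all other models are unchanged. So $A_1$'s aggregate position cannot drop. By Proposition~\ref{iso}, BST under $B_\text{Borda}$ is therefore exactly shift bribery with all-or-nothing prices under Borda. It thus suffices to invoke NP-hardness of that problem.

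\textbf{Step 1: cite or re-derive the hardness.} \citet{BREDERECK2016140} show that shift bribery under Borda is NP-hard, and remains so (indeed W[1]-hard for the number of bribed voters) for all-or-nothing prices. If their result is stated for unit or convex prices, I would check that the hardness reduction only ever uses bribes that shift $p$ to the top. In that case all-or-nothing prices with $c_D=1$ reproduce the instance. Otherwise I would give a direct reduction from Exact Cover by 3-Sets (X3C), sketched in Step 2.

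\textbf{Step 2: direct reduction, if needed.} Given a universe $U=\{u_1,\dots,u_{3q}\}$ and sets $S_1,\dots,S_t$, create a candidate for each element, the target $p=A_1$, and padding (dummy) candidates.
\begin{itemize}
\item For each set $S_j$, add a voter. Without bribery, $p$ is ranked just below the three elements of $S_j$, below the top. If the voter is bribed, $p$ moves to the top, gaining $3$ points and costing each element of $S_j$ one point.
\item Add further ``balancing'' voter pairs, built as a ranking and its near-reverse with dummies, to tune baseline scores. Each element candidate should lead $p$ by exactly $3q-1$ points, and the dummies should be far below.
\end{itemize}
With unit costs and budget $q$, bribing $q$ voters gives $p$ a gain of $3q$. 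Each element candidate loses one point for each bribed set containing it. So $p$ wins exactly when every element is covered, i.e.\ when the chosen sets form an exact cover. The converse direction uses the budget bound.

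\textbf{Step 3: finish.} Tie-breaking is fixed, so winners are determined by strict comparisons, and I would set the margins so that "top element" matches the unique or co-winner notion used in the cited result. Ordinal translation into datasets is immediate: define $\phi^0$ scores realizing each voter's order, and set $G_D=1-\phi^0(A_1,D)$, which is large enough to put $A_1$ on top. This construction is polynomial.

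The main obstacle is the balancing step in Step 2: designing padding voters that fix exact score gaps without letting dummies or $p$'s unbribed positions interfere. Because of this, I would first rely on citing \citet{BREDERECK2016140}.
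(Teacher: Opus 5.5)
Your main route (checking that Borda is a monotone ordinal operator, applying Proposition~\ref{iso}, and importing NP-hardness of Borda shift bribery under all-or-nothing prices) is essentially the paper's proof, which settles your Step~1 hedge by citing Theorem~7 of \cite{Elkind2009} together with the remark in Section~3.3 of \cite{BREDERECK2016140} that those hardness constructions already use all-or-nothing prices; your explicit monotonicity check and polynomial realization of voters as score columns are details the paper leaves implicit. The X3C fallback is therefore unnecessary, but as written it would fail: with an initial lead of $3q-1$, any $q$ bribes let $p$ beat every element candidate regardless of coverage, so you need a lead of $3q+1$ (co-winner sense of ``top element'' in BST) or $3q$ (unique winner), and the balancing voters should be priced above the budget, since bribing one in which $p$ sits low would otherwise be far more profitable than bribing a set voter.
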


It follows that the problem of choosing a subset of benchmark tasks to include in training to guarantee the top rank of the target model is NP-hard under mean win rate. However, NP-hardness merely provides worst-case complexity guarantees, since it merely reflects how hard the problem is in principle \citep{brelsfold}. On a given benchmark instance, the model developer may still easily find effective strategies for implementing benchmark-specific training. This motivates the analysis of robustness of different operators to benchmark-specific training on a given instance.

\section{Robustness to Benchmark-Specific Training}\label{robust}
\subsection{Instance-Level Robustness of a Benchmark Operator}
For a specific benchmark instance, we ask how many datasets must minimally be included in the training to guarantee a top rank of  $A_1$. This minimum, which we call the \textit{instance-level robustness}, provides an exact numerical measure of a benchmark's resistance to targeted manipulation for a given operator $B$. A large robustness value means that many datasets must be trained on to ensure the top placement of $A_1$ which raises the cost of training and the probability of detection \citep{dong-etal-2024, jiang2024investigatingdatacontaminationpretraining}. As we measure robustness as the minimum number of datasets that must be included in training to make $A_1$ top-ranked, we adopt the uniform cost $c_D=1$ for each $D \in S$, a special case of the framework in Section~\ref{framework}. While the general formulation captures the variability in manipulation cost -- datasets differ in size, difficulty and compute required for training -- instance-level robustness is a \textit{structural} property of the benchmark itself, defined independently of any costs that a developer may face. It is also the quantity directly relevant to the real-life benchmarking practice, as detection methods for contamination focus on \textit{which} datasets appear in training data \citep{yang2023rethinkingbenchmarkcontaminationlanguage}. Measuring robustness by the number of bribed voters is standard in the literature. \cite{Elkind2010} study the optimal amount of voters that need to be targeted in an electoral campaign. This is related to the margin of victory, the minimum number of voters that need to change their votes to alter the election outcome \citep{Dey15}. \cite{Shiryaev,Boehmer2021, Boehmer22} build on this perspective to study the robustness of election winners.

\begin{definition}\label{robustdef}
 Fix $\mathcal D$, $\mathcal{A}$, a target model $A_1 \in \mathcal{A}$, an evaluation metric $\phi: \mathcal{A}\times \mathcal{D} \to [0,1]$ and a default metric $\phi^0$. For a benchmark operator $B$, the \textbf{instance-level robustness} $k_B(\phi^0, A_1)$ is defined as $k_B(\phi^0, A_1):= \text{min} \{|S| : S \subseteq \mathcal{D}, A_1 \ \text{is the top element of} \ B(\phi^T) \ \text{under training on} \ S  \}, $ with $k_B(\phi^0, A_1):= + \infty$ if no such $S$ exists.\footnote{Note that Proposition~\ref{iso} applies only to ordinal benchmark operators. Arithmetic mean and median are cardinal. Sections~\ref{avg} and ~\ref{med} analyze instance-level robustness of these operators and do not rely on the bribery correspondence.}
\end{definition}

\subsection{Arithmetic Mean Aggregation}\label{avg}
The arithmetic mean is the default aggregation rule across many prominent multi-task suites such as GLUE, SuperGLUE and MMLU. It has been extensively criticized for being sensitive to outliers and for combining task scores whose improvements may not be directly comparable \citep{ agarwal21}, yet it remains prevalent in benchmarking practice.

The arithmetic mean is a benchmark operator $B_\text{mean}$ which ranks all models $A \in \mathcal{A}$ according to $\mu_\phi(A)= \frac{1}{m} \sum_{D\in \mathcal{D}} \phi (A, D)$. Define the \textit{mean deficit} that $A_1$ has to overcome to become the top element of $B_\text{mean}(\phi^T)$ by $\Delta_\text{mean}(\phi^0, A_1):= \text{max} \{0, \text{max}_{A \in \mathcal{A}} (\mu_{\phi^0}(A) - \mu_{\phi^0}(A_1)) \}.$ It measures by how much the default performance of $A_1$ falls short of the strongest competing model in the leaderboard. If the target model already holds the highest mean value under default evaluation protocol, this deficit is zero and no benchmark-specific training is needed. Let $G_{(1)}\geq \dots \geq G_{(m)}$ denote the maximal gains for each dataset $D$ sorted in decreasing order. Since benchmark-specific training cannot decrease performance of $A_1$, the sum $\sum_{i=1}^k G(i)$ is the largest total increase in metric score that the model developer can achieve by training the target model on $k$ datasets.

\begin{theorem}\label{mean}
$k_\text{mean}(\phi^0, A_1)= \text{min} \{k \in \{0, \dots, m \}: \sum_{i=1}^k G(i) \geq m \cdot \Delta_\text{mean} (\phi^0, A_1)\}$ with $k_\text{mean}(\phi^0, A_1)=+ \infty$ if no such $k$ exists.
\end{theorem}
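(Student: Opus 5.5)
The argument is a direct two-sided characterization: we reduce the question of whether $A_1$ tops $B_\text{mean}(\phi^T)$ to a single inequality on the total gain, and then optimize the choice of $S$ for each fixed size $k$. We treat ``top element'' in the weak sense, with ties counted as a top position, consistent with $\text{pref}(\mathcal{A})$ being a weak order; note that the operator is cardinal, so Proposition~\ref{iso} is not needed. Training on $S$ changes only the row of $A_1$ in the score matrix, so $\mu_{\phi^T}(A)=\mu_{\phi^0}(A)$ for all $A\neq A_1$. For the target,
\begin{equation*}
\mu_{\phi^T}(A_1)=\mu_{\phi^0}(A_1)+\tfrac{1}{m}\textstyle\sum_{D\in S} g_D .
\end{equation*}
Hence $A_1$ is a top element of $B_\text{mean}(\phi^T)$ if and only if $\mu_{\phi^0}(A_1)+\frac1m\sum_{D\in S}g_D\ge \mu_{\phi^0}(A)$ for all $A\neq A_1$. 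This is equivalent to $\sum_{D\in S} g_D\ge m\cdot\Delta_\text{mean}(\phi^0,A_1)$. Indeed, the inequality for $A=A_1$ is trivial, and the outer $\max\{0,\cdot\}$ is harmless because the left-hand side is nonnegative.

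Since the mean is strictly increasing in each entry, the developer should take $g_D=G_D$ on every chosen dataset. So $S$ is feasible if and only if some admissible gains exist with $\sum_{D\in S}g_D\ge m\Delta_\text{mean}$, which holds if and only if $\sum_{D\in S}G_D\ge m\Delta_\text{mean}$. We do not need the monotonicity argument for ordinal operators here; this is direct.

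For fixed $|S|=k$, the maximum of $\sum_{D\in S}G_D$ is $\sum_{i=1}^k G_{(i)}$, attained by the $k$ datasets with the largest maximal gains. This is the standard exchange argument: replacing any chosen $D$ by an unchosen $D'$ with $G_{D'}\ge G_D$ does not decrease the sum. Therefore a feasible $S$ of size $k$ exists if and only if $\sum_{i=1}^kG_{(i)}\ge m\Delta_\text{mean}$. The partial sums are nondecreasing in $k$ because $G_D\ge0$, so the feasible sizes form an up-set of $\{0,\dots,m\}$. The minimum in Definition~\ref{robustdef} is then exactly the stated minimum. If even $k=m$ fails, no $S$ works and both sides equal $+\infty$.

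The main obstacle is interpretive rather than technical. One must fix the convention on ties: if ``top element'' required strict superiority, the inequality would become strict and the formula would need $>$. One must also make sure that ties in sorting the $G_D$ do not matter. I would state the weak-order convention explicitly, since it is what the paper's use of $\text{pref}(\mathcal{A})$ suggests. The rest is routine.
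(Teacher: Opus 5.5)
Your proposal is correct and follows the paper's proof. Like the paper, you reduce top-ranking under the mean to the single inequality $\sum_{D\in S} g_D \ge m\,\Delta_\text{mean}(\phi^0,A_1)$, take maximal gains, and note that for fixed $|S|=k$ the best choice is the $k$ datasets with the largest $G_D$. Your extra remarks (the direct justification of $g_D=G_D$ for this cardinal operator, why the outer $\max\{0,\cdot\}$ does no harm, and the weak-tie convention for ``top element'') make explicit points the paper's proof uses implicitly.
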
 

The optimal strategy is simple: sort the datasets by their maximal gain $G_D$ and train on the largest ones until their cumulative gain covers the mean deficit. The robustness is small whenever $A_1$ is either already close to the top or has a handful of datasets with large room for improvement and is infinite exactly when training on every available dataset cannot close the gap in mean. 

\subsection{Median Aggregation}\label{med}
Median is the natural robust alternative to mean aggregation, as it is insensitive to how large individual scores are and depends only on how many values lie above and below the middle of the score distribution. Still, it remains a pointwise aggregation rule and has been criticized for leaving out too much information \citep{peyrard-etal-2021-better} and for failing to distinguish between models when many of them share the same median \citep{gera-etal-2025-justrank}. Under median aggregation, $A_1$ becomes top-ranked if enough of its dataset scores reach the highest median among all competing models.

For each $A \in \mathcal{A}$, let $\phi_{(1)}(A) \leq \cdots \leq \phi_{(m)}(A)$ be the order statistics of $\{\phi(A,D): D\in\mathcal{D}\}$ and set $h:= \lfloor \frac{m}{2} \rfloor +1$. The benchmark operator $B_\text{median}$ ranks all models $A \in \mathcal{A}$ by the value of their (upper) median across datasets $\tilde{\phi} (A):= \phi_{(h)}(A)$. To characterize when $A_1$ becomes top-ranked, we introduce the threshold $\tau (\phi^0):= \text{max}_{A \in \mathcal{A} \setminus \{A_1\}} \ \tilde{\phi^{0}} (A)$, which is the highest median attained by any competing model under the default evaluation protocol. Since benchmark-specific training affects only the target model $A_1$, all other models in $\mathcal{A}$ retain their default metric values $\phi^{0}(A, D)=\phi^{T} (A,D)$. Hence, $A_1$ becomes the top element of $B_\text{median}(\phi^T)$ if and only if $\tilde{\phi}^T (A_1) \geq \tau (\phi^0)$, i.e., if the post-training median of the target model is at least as high as $\tau(\phi^0)$. This is equivalent to saying that at least $m-h+1$ datasets satisfy $\phi^T(A_1, D) \geq \tau (\phi^0)$. We therefore separate all datasets in the suite into those on which the target model already lies above the threshold and those on which it could be pushed across $\tau (\phi^0)$ by benchmark-specific training. Let $N_{\tau(\phi^0)}(\phi^0, A_1):=|\{D \in \mathcal{D} : \phi^0 (A_1, D) \geq \tau(\phi^0) \}|,$ be the number of datasets on which $A_1$ already reaches $\tau(\phi^0)$ and let $C_{\tau(\phi^0)}(\phi^0, A_1):=|\{D \in \mathcal{D} : \phi^0 (A_1, D) < \tau(\phi^0) \leq \phi^0 (A_1, D) + G_D \}|$ be the number of datasets on which $A_1$ is initially below the threshold but can be pushed to at least $\tau(\phi^0)$ by benchmark-specific training. We define the \textit{median deficit} of $A_1$ by $\Delta_{\text{median}}(\phi^0, A_1):=\text{max} \{0, m-h+1-N_{\tau(\phi^0)}(\phi^0, A_1) \}.$ This is the number of additional datasets on which $A_1$ must reach $\tau(\phi^0)$ in order for its post-training median to cross the threshold. If $A_1$ already has at least $m-h+1$ datasets on which its performance lies above the threshold $\tau(\phi^0)$, then this deficit is zero and no training is needed.
\begin{theorem}\label{median}
$k_\text{median}(\phi^0, A_1)=\Delta_{\text{median}}(\phi^0, A_1) \ \text{if} \ C_{\tau(\phi^0)}(\phi^0, A_1)\geq \Delta_{\text{median}}(\phi^0, A_1).$ If $C_{\tau(\phi^0)}(\phi^0, A_1)<  \Delta_{\text{median}}(\phi^0, A_1)$, then $k_\text{median}(\phi^0, A_1)=+\infty$.
\end{theorem}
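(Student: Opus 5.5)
We argue directly from the characterization stated before the theorem: since only $A_1$'s scores change, $A_1$ is the top element of $B_\text{median}(\phi^T)$ if and only if $\tilde{\phi}^T(A_1)\geq \tau(\phi^0)$. This in turn holds if and only if at least $m-h+1$ datasets satisfy $\phi^T(A_1,D)\geq\tau(\phi^0)$. The reason is that $\phi^T_{(h)}(A_1)\geq\tau$ exactly when the $m-h+1$ largest order statistics $\phi^T_{(h)},\dots,\phi^T_{(m)}$ are all at least $\tau$.

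With this, the problem becomes a counting question. Partition $\mathcal{D}$ into three classes:
\begin{itemize}
\item the $N_{\tau(\phi^0)}$ datasets already at or above $\tau$;
\item the $C_{\tau(\phi^0)}$ datasets that are below $\tau$ but liftable to $\tau$;
\item the remaining datasets, with $\phi^0(A_1,D)+G_D<\tau$.
\end{itemize}
For any training set $S$, a dataset $D$ satisfies $\phi^T(A_1,D)\geq\tau$ only if $D$ is in the first class, or $D\in S$ lies in the second class. Here we use $\phi^T(A_1,D)=\phi^0(A_1,D)$ for $D\notin S$ and $\phi^T(A_1,D)\leq \phi^0(A_1,D)+G_D$ for $D\in S$. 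Datasets in the first class stay above $\tau$ whether trained on or not, because training cannot decrease scores.

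\emph{Lower bound.} Let $S$ be any feasible training set. The count of datasets at or above $\tau$ is at most $N_\tau+|S\cap\text{class 2}|$. Requiring this to be at least $m-h+1$ forces $|S|\geq|S\cap\text{class 2}|\geq m-h+1-N_\tau$, so $|S|\geq\Delta_\text{median}$. This also shows that feasibility requires $C_\tau\geq\Delta_\text{median}$; if $C_\tau<\Delta_\text{median}$, no $S$ works and $k_\text{median}=+\infty$.

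\emph{Upper bound.} If $C_\tau\geq\Delta_\text{median}$, choose $S$ to be any $\Delta_\text{median}$ datasets from the second class and set $g_D=G_D$ on them. Then $N_\tau+\Delta_\text{median}\geq m-h+1$ datasets reach $\tau$, and by the characterization $A_1$ is top-ranked. When $\Delta_\text{median}=0$, we take $S=\emptyset$.

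The main point needing care is the order-statistic equivalence together with ties. The definition of top element has to allow ties, which is consistent with the stated characterization "$\geq\tau$". We also need the fact, noted above, that non-target medians are unchanged, so that $\tau$ remains the relevant threshold after training.
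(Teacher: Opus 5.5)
Your proof is correct and follows essentially the same route as the paper. You reduce the top-rank condition to having at least $m-h+1$ datasets with $\phi^T(A_1,D)\geq\tau(\phi^0)$, split the datasets into those already at the threshold, those liftable across it, and the rest, then use the fact that each trained dataset adds at most one new value at or above $\tau(\phi^0)$ to get both the lower bound and infeasibility when $C_{\tau(\phi^0)}(\phi^0,A_1)<\Delta_{\text{median}}(\phi^0,A_1)$, with any $\Delta_{\text{median}}(\phi^0,A_1)$ liftable datasets giving the matching upper bound; your explicit chain $|S|\geq|S\cap\text{class 2}|\geq m-h+1-N_{\tau(\phi^0)}$ simply makes the paper's ``no smaller subset can work'' step more precise.
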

While under mean every task with $G_D>0$ contributes to closing the deficit, under median, only tasks that can be pushed from below $\tau(\phi^0)$ to at least $\tau(\phi^0)$ contribute at all. Robustness of the median is either exactly $\Delta_{\text{median}}(\phi^0, A_1)$ when enough datasets are available or $\infty$ when this is not the case.

\subsection{Mean Win Rate}\label{win}
Mean win rate depends only on task rankings: for each dataset, it asks how high a model ranks relative to other models and then averages those win rates across all datasets. This makes it particularly appealing for multi-task benchmarking, since the influence of metric scale and outliers on the overall ranking is removed \citep{hardt2025emerging}.

For each dataset $D \in \mathcal{D}$, we define the win rate of model $A$ by $w_D(\phi,A)=\frac{1}{n} \sum_{A' \in \mathcal{A}} \mathbf{1} \{\phi (A,D) \geq \phi (A',D) \}.$ The operator $B_{win}$ ranks all models $A \in \mathcal{A}$ according to the mean win rate $w (\phi,A)=\frac{1}{m} \sum_{D \in \mathcal{D}}w_D(\phi,A).$ Since $B_{\text{win}}$ is an ordinal operator, we can set $g_D=G_D$ for all $D \in S$ without loss of generality. The pairwise gain of dataset $D$ against $A$ denotes by how much the lead of $A$ over $A_1$ decreases, measured in win rate, as the result of the benchmark-specific training of $A_1$ on $D$. For each $D \in \mathcal{D}$, we consider the benchmark-specific training of $A_1$ on $D$ alone $(S=\{D\})$ and for each $A \neq A_1$ we define:
$q_D(A):=(w_D(\phi^T, A_1)-w_D(\phi^0, A_1))+(w_D(\phi^0, A)-w_D(\phi^T, A)).$ The first term captures the gain in win rate of $A_1$ on $D$, while the second term captures the loss in win rate of the competing model $A$. Note that $q_D(A)\geq 0$ for all $D\in \mathcal{D}$ and all $A \neq A_1$, since the benchmark specific training can only improve $A_1$'s position relative to $A$. For each $A \neq A_1$, we define the \textit{pairwise win rate deficit} by $d_\text{win}(\phi^0, A_1,A):=w(\phi^0,A) - w (\phi^0,A_1)$ and set $\Delta_{\text{win}}(\phi^0, A_1,A):=\text{max} \{0,d_\text{win}(\phi^0, A_1,A)\}$. Then, we can show that $A_1$ is ranked at least as high as $A$ under $B_\text{win}(\phi^T)$ if and only if $\sum_{D \in S}q_D(A)\geq m \cdot d_\text{win}(\phi^0, A_1,A)$ (see Appendix~\ref{app}). Now $k_\text{win}(\phi^0, A_1,A)$ can be understood as the minimum number $|S|$ such that this inequality holds for a fixed model $A$. For each $A \neq A_1$, let $q_{(1)}(A)\geq \dots \geq q_{(m)}(A)$ denote the pairwise gains sorted in decreasing order.

\begin{theorem}\label{pairwisewin}
$k_\text{win}(\phi^0, A_1,A)=\text{min}\{k \in \{0, \dots, m \}: \sum_{i=1}^k q_{(i)}(A) \geq m \cdot \Delta_{\text{win}}(\phi^0, A_1,A)\}$ with $k_\text{win}(\phi^0, A_1, A)=+ \infty$ if no such $k$ exists.
\end{theorem}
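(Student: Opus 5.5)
The proof rests on showing that the pairwise gains are additive across datasets. That reduces the problem to a sorted-sum threshold, exactly as in Theorem~\ref{mean}.

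\textbf{Step 1: additivity.} Training on a dataset $D$ changes only the ranking $\succeq_D$, so it changes only $w_D(\cdot,A_1)$ and $w_D(\cdot,A)$. For $D\in S$, the post-training ranking on $D$ is the same as under training on $\{D\}$ alone, since $g_D=G_D$ is fixed. For $D\notin S$, nothing changes. Hence
\[
\bigl(w(\phi^T,A_1)-w(\phi^T,A)\bigr)-\bigl(w(\phi^0,A_1)-w(\phi^0,A)\bigr)=\frac{1}{m}\sum_{D\in S}q_D(A).
\]
It follows that $A_1$ is weakly above $A$ in $B_\text{win}(\phi^T)$ if and only if $\sum_{D\in S}q_D(A)\ge m\, d_\text{win}(\phi^0,A_1,A)$. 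This is the equivalence referenced in Appendix~\ref{app}.

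\textbf{Step 2: replace $d_\text{win}$ by $\Delta_\text{win}$.} Every $q_D(A)\ge 0$, as noted above. If $d_\text{win}\le 0$, then $S=\emptyset$ already satisfies the condition, and the threshold $m\,\Delta_\text{win}=0$ gives $k=0$, so both sides agree. If $d_\text{win}>0$, then $d_\text{win}=\Delta_\text{win}$ and there is nothing to check.

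\textbf{Step 3: optimal subset.} Fix $|S|=k$. Among all such subsets, $\sum_{D\in S}q_D(A)$ is maximised by choosing the $k$ datasets with the largest $q_D(A)$, and the maximum equals $\sum_{i=1}^k q_{(i)}(A)$. This follows by an exchange argument: swapping a chosen dataset for an unchosen one with larger $q$ never decreases the sum.

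Consequently, a feasible $S$ of size $k$ exists if and only if $\sum_{i\le k}q_{(i)}(A)\ge m\,\Delta_\text{win}$. Because the $q$ values are nonnegative, this condition is monotone in $k$. The minimum feasible $k$ is therefore exactly the stated value. If no $k\le m$ is feasible, then even $S=\mathcal D$ fails, and the robustness is $+\infty$.

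\textbf{Main obstacle.} Step 1 is the only delicate point. One must confirm that $q_D(A)$, which is defined through the single-dataset training $S=\{D\}$, coincides with the contribution of $D$ under any larger $S$. This holds because each $w_D$ depends only on the column of scores for $D$, and that column under $\phi^T$ depends only on whether $D\in S$. With ties handled by the $\ge$ convention in $w_D$, this needs to be checked carefully but is routine.
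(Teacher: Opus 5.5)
Your proposal is correct and follows the paper's proof essentially step for step: the additive identity $w(\phi^T,A)-w(\phi^T,A_1)=d_\text{win}(\phi^0,A_1,A)-\frac{1}{m}\sum_{D\in S}q_D(A)$ (the paper's Lemma~\ref{windiffapp}), then replacing $d_\text{win}$ by $\Delta_\text{win}$ using $q_D(A)\geq 0$, then the top-$k$ sorted-sum argument. Two points in your write-up are slightly more careful than the paper: you check explicitly that the single-dataset gain $q_D(A)$ equals $D$'s contribution under any larger $S$, and you handle the boundary case $d_\text{win}=0$, which the paper's ``otherwise'' branch incorrectly labels as $d_\text{win}>0$.
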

 
Theorem~\ref{pairwisewin} provides a pairwise robustness characterization of mean win rate for a fixed competing model $A$. As in the mean case, the optimal strategy is to train $A_1$ on the datasets with the largest pairwise gains $q_D(A)$ until their sum covers the deficit. Making $A_1$ globally top-ranked under $B_{win}(\phi^T)$ requires $\sum_{D \in S}q_D(A)\geq m \cdot \Delta_\text{win}(\phi^0, A_1,A)$ to hold simultaneously for all $n-1$ competing models with the same choice of subset $S$. For each $D \in \mathcal{D}$, set $x_D := \mathbf{1}\{D \in S\}$. Then, we have $\sum_{D\in\mathcal{D}}x_D=|S|$ and obtain the following integer linear program.

\begin{theorem}\label{globalwin}
 $k_\text{win}(\phi^0, A_1)=\text{min}\sum_{D\in\mathcal{D}}x_D \ \ \text{s.t.} \ \ \sum_{D \in \mathcal{D}}q_D(A) x_D\geq m \cdot \Delta_\text{win}(\phi^0, A_1,A) \ \forall A \in \mathcal{A}\setminus \{ A_1\}$  with $k_\text{win}(\phi^0, A_1)=+ \infty$ if the program is infeasible.
\end{theorem}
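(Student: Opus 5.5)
The plan is to show that the feasible set of the integer linear program coincides with the set of indicator vectors of subsets $S$ that make $A_1$ a top element of $B_\text{win}(\phi^T)$. Once this is done, minimizing $\sum_D x_D=|S|$ over the feasible set is the same as minimizing $|S|$ in Definition~\ref{robustdef}. We use the bijection $S\leftrightarrow x\in\{0,1\}^m$ with $x_D=\mathbf{1}\{D\in S\}$. Infeasibility of the program then corresponds to the set in Definition~\ref{robustdef} being empty, which gives $+\infty$.

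The key step is to lift the single-dataset pairwise gains $q_D(A)$ to an arbitrary subset $S$. This is the additivity claim used before Theorem~\ref{pairwisewin}: for any $S$,
\[
w(\phi^T,A_1)-w(\phi^T,A)=d_\text{win}(\phi^0,A_1,A)\cdot(-1)+\tfrac{1}{m}\sum_{D\in S}q_D(A).
\]
Equivalently, $A_1\succeq A$ in $B_\text{win}(\phi^T)$ iff $\sum_{D\in S}q_D(A)\ge m\, d_\text{win}(\phi^0,A_1,A)$. I would prove this dataset by dataset. The win rates $w_D$ depend only on column $D$ of the score matrix. Training only alters $\phi(A_1,D)$ for $D\in S$, and with $g_D=G_D$ this change is the same whether we train on $\{D\}$ or on any $S\ni D$. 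Hence for $D\in S$, $w_D(\phi^T_S,\cdot)=w_D(\phi^T_{\{D\}},\cdot)$, and for $D\notin S$ nothing changes. Summing the per-dataset differences and dividing by $m$ gives the identity. This is the step I expect to need the most care, since it is where the definition of $q_D$ via $S=\{D\}$ must be justified for general $S$.

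Next I would replace $d_\text{win}$ by $\Delta_\text{win}=\max\{0,d_\text{win}\}$. If $d_\text{win}\le 0$, both the condition with $d_\text{win}$ and the one with $0$ hold for every $S$, because $q_D(A)\ge 0$. So the conditions are equivalent, and otherwise they are identical. Then $A_1$ is a top element of $B_\text{win}(\phi^T)$ iff $A_1\succeq A$ for all $A\ne A_1$. Here "top element" means weakly maximal in the preference relation, consistent with "ranked at least as high". This holds iff $x$ satisfies all $n-1$ constraints simultaneously. Combining this with the bijection and the objective identity completes the argument.
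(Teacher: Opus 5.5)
Your proposal is correct and follows essentially the same route as the paper. The paper first proves the additivity identity as a separate lemma (Lemma~\ref{windiffapp}) by the same dataset-by-dataset argument, replaces $d_\text{win}$ by $\Delta_\text{win}$ using $q_D(A)\geq 0$, and encodes $S$ by the indicator variables $x_D$, with infeasibility giving $+\infty$; your explicit observation that column $D$ of the score matrix is identical whether one trains on $\{D\}$ or on any $S\ni D$ spells out a step the paper leaves implicit.
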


This is a binary covering program: the variables are
datasets, the constraints are competing models and $q_D(A)$ is the amount by which choosing dataset $D$ helps close the deficit against $A$. The solution is NP-hard in general: when $q_D(A)\in\{0,1\}$ and
$m\Delta_{\text{win}}(\phi^0,A_1,A)=1$ for every competitor $A$, this becomes exactly set cover \citep{Karp1972}. For the sizes occurring in benchmarking practice, we can compute $k_{\text{win}}(\phi^0,A_1)$ exactly with integer-programming solvers (see Section~\ref{exp}).

\begin{proposition}\label{lowerboundwin}
$k_{\text{win}}(\phi^0, A_1) \geq \text{max}_{A \in \mathcal{A}\setminus \{A_1\}} k_{\text{win}}(\phi^0, A_1, A)$.  
\end{proposition}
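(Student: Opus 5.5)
The argument is a feasibility-inclusion argument: any subset $S$ that makes $A_1$ globally top-ranked under $B_\text{win}(\phi^T)$ is also feasible for each pairwise problem, so the global minimum cannot undercut any pairwise minimum. Concretely, I would first handle the degenerate case. If $k_\text{win}(\phi^0,A_1)=+\infty$, the inequality holds trivially in the extended reals. So assume the integer program of Theorem~\ref{globalwin} is feasible, and let $x^\ast$ be an optimal solution with associated set $S^\ast=\{D: x^\ast_D=1\}$, so that $|S^\ast|=k_\text{win}(\phi^0,A_1)$.

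Next, fix an arbitrary competitor $A\in\mathcal{A}\setminus\{A_1\}$. By the constraint of Theorem~\ref{globalwin} corresponding to $A$, we have $\sum_{D\in S^\ast} q_D(A)\ge m\cdot\Delta_\text{win}(\phi^0,A_1,A)$. Since the $q_D(A)$ are nonnegative, the sum of the $|S^\ast|$ largest pairwise gains dominates the sum over $S^\ast$:
\[
\sum_{i=1}^{|S^\ast|} q_{(i)}(A)\;\ge\;\sum_{D\in S^\ast} q_D(A)\;\ge\; m\cdot\Delta_\text{win}(\phi^0,A_1,A).
\]
Hence $k=|S^\ast|$ belongs to the set whose minimum defines $k_\text{win}(\phi^0,A_1,A)$ in Theorem~\ref{pairwisewin}. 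That set is therefore nonempty, so $k_\text{win}(\phi^0,A_1,A)$ is finite, and $k_\text{win}(\phi^0,A_1,A)\le |S^\ast|=k_\text{win}(\phi^0,A_1)$.

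Since $A$ was arbitrary, taking the maximum over $A\neq A_1$ gives the claim. The only point needing care is the step above, which replaces an arbitrary subset of size $|S^\ast|$ by the top-$|S^\ast|$ gains; this is just the rearrangement/sorting fact already underlying Theorem~\ref{pairwisewin}. Alternatively, one can avoid it by appealing directly to the definition of the pairwise robustness as the minimum $|S|$ satisfying the pairwise inequality, of which $S^\ast$ is a witness. I expect no real obstacle beyond stating the $+\infty$ conventions consistently.
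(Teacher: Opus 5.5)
Your proof is correct and is essentially the paper's feasibility-inclusion argument: any set satisfying all constraints of Theorem~\ref{globalwin} satisfies each pairwise constraint, so its size is at least every $k_{\text{win}}(\phi^0,A_1,A)$. The differences are cosmetic. You fix an optimal $S^\ast$, treat the $+\infty$ case explicitly, and check membership through the sorted sums of Theorem~\ref{pairwisewin}; nonnegativity of $q_D(A)$ is not actually needed for that step. The paper instead takes an arbitrary feasible $S$, minimizes at the end, and leaves the infeasible case implicit.
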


Since $A_1$ being top-ranked overall implies, in particular, that it ranks at least as high as each competing model individually, Proposition~\ref{lowerboundwin} provides a lower bound for the robustness of mean win rate.

\subsection{Pairwise Majority Count}\label{maj}
Mean win rate aggregates pairwise comparisons on each task into a single score for each model and then averages these scores across tasks. Aggregating across tasks first instead yields the pairwise majority count. For each $A,A' \in \mathcal{A}$, $M(\phi, A, A')=\sum_{D \in \mathcal{D}}\mathbf{1} \{\phi (A,D) \geq \phi (A',D) \}$ is the number of tasks on which $A$ ranks at least as high as $A'$. Set $\mu:=\lceil \frac{m}{2} \rceil$ and define the \textit{weak pairwise majority relation} $\succeq_M$ by $A\succeq_M A' \Leftrightarrow M(\phi,A,A')\geq\mu$. Unlike mean win rate, $\succeq_M$ on any pair $A,A'$ depends only on the pairwise comparisons $\{\mathbf{1}\{\phi(A,D)\geq\phi(A',D)\}\}_{D\in\mathcal{D}}$ and is therefore invariant to the addition or removal of other models from the leaderboard.\footnote{Instability to changes of model set implies violation of the pairwise independence of irrelevant alternatives \citep{arrow}.} We call $A^*$ a \textit{weak Condorcet winner} of $\phi$ if $A^*\succeq_M A$ for every $A\neq A^*$. We define $k_{\text{maj}}(\phi^0,A_1)$ as the minimum $|S|$ such that $A_1$ is a weak Condorcet winner of $\phi^T$.\footnote{This is stronger than merely requiring $A_1$ to attain the highest count $\sum_{A'\neq A_1}\mathbf{1}\{A_1 \succeq_M A'\}$. When the weak Condorcet winner is unique, this criterion unambiguously identifies the top model and thus corresponds to Definition~\ref{robustdef}.} For each $A\neq A_1$, let $L^0(A):=\{D\in\mathcal{D}:\phi^0(A,D)>\phi^0(A_1,D)\}$ be the set of tasks on which $A$ strictly beats $A_1$ under the default protocol. Then, it holds that $M(\phi^0,A_1,A)=m-|L^0(A)|$ and we define the \textit{pairwise majority deficit} $\Delta_{\text{maj}}(\phi^0,A_1,A):=\text{max} \{0, \mu-(m-|L^0(A)|)\}$. For a fixed $A$, let $k_{\text{maj}}(\phi^0,A_1,A)$ be the minimum $|S|$ such that $M(\phi^T,A_1,A)\geq \mu$. Furthermore, we use the assumption from Section~\ref{BSTform} on maximal gains: training on any chosen task makes $A_1$ top-ranked on that task.

\begin{theorem}\label{pairwisemaj}
$k_{\text{maj}}(\phi^0,A_1,A)=\Delta_{\text{maj}}(\phi^0,A_1,A)$.    
\end{theorem}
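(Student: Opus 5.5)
The proof is a two-sided count. We show that training on any $k$ tasks raises $M(\cdot,A_1,A)$ by at most $k$, which gives the lower bound, and that a specific choice of $\Delta_{\text{maj}}(\phi^0,A_1,A)$ tasks suffices, which gives the upper bound. The basic observation is how $M(\phi^T,A_1,A)$ changes under training on $S$. Only $A_1$'s scores change, and they can only increase. By the assumption of Section~\ref{BSTform}, $A_1$ becomes top-ranked on every $D\in S$, so $\phi^T(A_1,D)\geq\phi^T(A,D)=\phi^0(A,D)$ for all $D\in S$. Tasks outside $S$ are untouched. Tasks in $S\setminus L^0(A)$ already had $\phi^0(A_1,D)\geq\phi^0(A,D)$, and since $\phi^T(A_1,D)\geq \phi^0(A_1,D)$, they still count for $A_1$. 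Hence
\[
M(\phi^T,A_1,A)=m-|L^0(A)|+|S\cap L^0(A)|.
\]
Establishing this identity carefully from the three bullet conditions defining $\phi^T$ is the first step.

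For the lower bound, take any $S$ with $M(\phi^T,A_1,A)\geq\mu$. The identity gives $|S|\geq|S\cap L^0(A)|\geq \mu-(m-|L^0(A)|)$, and trivially $|S|\geq 0$. Therefore $|S|\geq \Delta_{\text{maj}}(\phi^0,A_1,A)$.

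For the upper bound, there are two cases. If $\Delta_{\text{maj}}(\phi^0,A_1,A)=0$, then $S=\emptyset$ already satisfies $M(\phi^0,A_1,A)\geq\mu$. Otherwise, choose $S\subseteq L^0(A)$ with $|S|=\mu-(m-|L^0(A)|)$. Such an $S$ must exist, so we need $\mu-m+|L^0(A)|\leq |L^0(A)|$, i.e.\ $\mu\leq m$. This holds because $\mu=\lceil m/2\rceil\leq m$ for $m\geq 1$. The identity then gives $M(\phi^T,A_1,A)=\mu$, so this $S$ attains the bound. In particular $k_{\text{maj}}(\phi^0,A_1,A)$ is always finite, unlike the earlier operators.

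The main point needing care is the identity. The weak inequality $\geq$ in the definition of $M$ makes ties count for $A_1$, so we must check that tasks not in $L^0(A)$ are already counted by default and that training cannot un-count them. Everything else is routine counting.
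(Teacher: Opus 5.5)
Your proof is correct and follows the paper's argument essentially verbatim: the same identity $M(\phi^T,A_1,A)=m-|L^0(A)|+|S\cap L^0(A)|$, the same choice of $S\subseteq L^0(A)$ of size $\Delta_{\text{maj}}(\phi^0,A_1,A)$, and the same existence check via $\mu\leq m$. The only step the paper spells out that you leave implicit is the reduction to $g_D=G_D$, which it justifies by noting that $\mathbf{1}\{\phi^T(A_1,D)\geq\phi^T(A,D)\}$ is non-decreasing in $g_D$; your lower bound does not actually need this, since with smaller gains the identity only becomes an upper bound on $M(\phi^T,A_1,A)$.
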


Like Theorem~\ref{pairwisewin}, this characterizes pairwise robustness for a fixed competing model $A$, which is bounded by $\mu=\lceil \frac{m}{2} \rceil$. Making $A_1$ a weak Condorcet winner under $\phi^T$ requires $A_1\succeq_M^T A$ to hold simultaneously for all $n-1$ competing models. Using the variables $x_D$ from Section~\ref{win}, this yields the following integer linear program.

\begin{theorem}\label{globalmaj}
$k_{\text{maj}}(\phi^0,A_1)=\text{min}\sum_{D\in\mathcal{D}}x_D \ \ \text{s.t.} \ \sum_{D \in L^0(A)}x_D\geq \Delta_\text{maj}(\phi^0, A_1,A) \ \forall A \in \mathcal{A}\setminus \{A_1\}$    
\end{theorem}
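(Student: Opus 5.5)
The approach is to reduce the global statement to Theorem~\ref{pairwisemaj} applied separately to each competitor, under the same subset $S$. The key tool is an exact description of how training on $S$ changes the pairwise count $M(\phi^T,A_1,A)$.

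Fix $S\subseteq\mathcal{D}$ and a competitor $A\neq A_1$. By the maximal-gain assumption of Section~\ref{BSTform}, training makes $A_1$ top-ranked on every $D\in S$, so $\phi^T(A_1,D)\geq\phi^T(A,D)=\phi^0(A,D)$ there. On every $D\notin S$, both scores equal their default values. Hence the tasks on which $A$ still strictly beats $A_1$ after training are exactly $L^0(A)\setminus S$. This gives
\[
M(\phi^T,A_1,A)=m-|L^0(A)\setminus S| = m-|L^0(A)|+|L^0(A)\cap S|.
\]
Consequently $A_1\succeq^T_M A$ holds if and only if $|L^0(A)\cap S|\geq \mu-(m-|L^0(A)|)$. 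Since the left-hand side is nonnegative, this is equivalent to $|L^0(A)\cap S|\geq\Delta_{\text{maj}}(\phi^0,A_1,A)$. This is the same computation that underlies Theorem~\ref{pairwisemaj}, now carried out for an arbitrary $S$ instead of an optimal one.

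Next, encode $S$ by $x_D=\mathbf{1}\{D\in S\}$. Then $|L^0(A)\cap S|=\sum_{D\in L^0(A)}x_D$ and $|S|=\sum_D x_D$. By definition, $A_1$ is a weak Condorcet winner of $\phi^T$ exactly when $A_1\succeq_M^T A$ for all $A\neq A_1$. By the previous paragraph, this holds if and only if $x$ satisfies every constraint of the program. The binary vectors $x$ correspond bijectively to subsets $S$, so the feasible set of the program is exactly the set of subsets that make $A_1$ a weak Condorcet winner. Minimizing $|S|$ over this set is by definition $k_{\text{maj}}(\phi^0,A_1)$.

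Finally, I would check feasibility. Taking $S=\mathcal{D}$ gives $\sum_{D\in L^0(A)}x_D=|L^0(A)|\geq\mu-m+|L^0(A)|$ because $\mu\leq m$. The program is therefore always feasible, which is why no $+\infty$ case appears in the statement.

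The main obstacle is the first step. It relies on the assumption that a trained task makes $A_1$ weakly beat $A$, and on the tie convention: ties count in $A_1$'s favour through $\geq$, so only strict wins by $A$ belong to $L^0(A)$. I would verify that the tie-breaking rule fixed in Section~\ref{bribe} does not change which tasks count toward $M$. Without the maximal-gain assumption, training on a task in $L^0(A)$ might fail to flip the comparison, and the linear constraints would no longer be exact.
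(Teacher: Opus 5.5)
Your proposal is correct and matches the paper's proof: it reduces each competitor's condition to $|S\cap L^0(A)|\geq\Delta_{\text{maj}}(\phi^0,A_1,A)$ via $M(\phi^T,A_1,A)=m-|L^0(A)|+|S\cap L^0(A)|$, encodes $S$ by $x_D$, and notes that $x\equiv 1$ is always feasible. You re-derive this identity for an arbitrary $S$, which is slightly more explicit than the paper: the paper cites Theorem~\ref{pairwisemaj}, although the needed equivalence appears only in that theorem's proof.
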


Unlike the robustness of mean, median and mean win rate, $k_{\text{maj}}(\phi^0, A_1)$ is always finite: training on every task satisfies all $n-1$ constraints simultaneously. Solving this program is NP-hard in general, although in practice we can compute it exactly using integer-programming solvers.

\begin{proposition}\label{lowerboundmaj}
$k_{\text{maj}}(\phi^0, A_1) \geq \text{max}_{A \in \mathcal{A}\setminus \{A_1\}} \Delta_\text{maj}(\phi^0, A_1, A)$.  
\end{proposition}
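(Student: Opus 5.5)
The bound follows by comparing feasible sets: every $S$ that is feasible for the global program is also feasible for each pairwise problem. By Theorem~\ref{globalmaj}, $k_{\text{maj}}(\phi^0,A_1)$ is the optimal value of the covering program. Write $x^*$ for an optimal solution and $S^*:=\{D: x^*_D=1\}$; the program is always feasible, since training on every task satisfies all constraints, so $x^*$ exists.

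Fix an arbitrary competitor $A\neq A_1$. The constraint indexed by $A$ gives $|S^*\cap L^0(A)|=\sum_{D\in L^0(A)}x^*_D\geq \Delta_{\text{maj}}(\phi^0,A_1,A)$. Because $S^*\cap L^0(A)\subseteq S^*$, it follows that $k_{\text{maj}}(\phi^0,A_1)=|S^*|\geq |S^*\cap L^0(A)|\geq \Delta_{\text{maj}}(\phi^0,A_1,A)$. Since $A$ was arbitrary, taking the maximum over $A\in\mathcal{A}\setminus\{A_1\}$ gives the claim.

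One could instead argue directly from the definitions, which gives the same conclusion. If $A_1$ is a weak Condorcet winner of $\phi^T$ under $S$, then in particular $M(\phi^T,A_1,A)\geq\mu$. So $S$ is feasible for the pairwise problem, and therefore $|S|\geq k_{\text{maj}}(\phi^0,A_1,A)$. Theorem~\ref{pairwisemaj} then identifies $k_{\text{maj}}(\phi^0,A_1,A)$ with $\Delta_{\text{maj}}(\phi^0,A_1,A)$.

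The only point needing care is that the per-competitor constraint really is necessary, and not merely implied by the program's formulation. This rests on the maximal-gain assumption: training on $D$ makes $A_1$ top-ranked on $D$, so it converts exactly the tasks in $S\cap L^0(A)$ into wins for $A_1$ against $A$. Theorems~\ref{pairwisemaj} and~\ref{globalmaj} already encode this, so no real obstacle remains.
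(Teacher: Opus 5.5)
Your proposal is correct and follows essentially the same route as the paper. Feasibility of $S$ for the program of Theorem~\ref{globalmaj} gives $|S|\ge|S\cap L^0(A)|\ge\Delta_{\text{maj}}(\phi^0,A_1,A)$ for every competitor $A$, and taking the maximum over $A$ yields the bound. Working with an optimal $S^*$ instead of an arbitrary feasible $S$ (and then minimizing) is an inessential difference.
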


As for mean win rate, equality holds when the same selected datasets that overcome a model with the largest pairwise majority deficit also satisfy all other competitors' constraints. Otherwise, datasets that are useful against one competing model may not cover another and the inequality is strict.

\section{Experiments}\label{exp}
\textbf{Experimental setup.} We implement the framework of Section~\ref{framework} on  MMLU \citep{Hendrycks2021} as evaluated by HELM \citep{liang2023} and BIG-Bench Hard (BBH) \citep{Srivastava2022BeyondTI, suzgun2022challengingbigbenchtaskschainofthought} as evaluated by the Hugging Face Open LLM Leaderboard. For each suite, we treat every individual subject or task as a distinct dataset $D \in \mathcal{D}$ and the benchmark's model set as $\mathcal{A}$. We compute $k_B(\phi^0, A_1)$ for arithmetic mean, median, mean win rate and pairwise majority for each target model $A_1 \in \mathcal{A}$. The two suites differ in evaluation protocol and model coverage, which lets us assess whether the patterns we observe actually reflect properties of the aggregation rule. For MMLU, we use the public HELM run logs at version v1.0.0, retaining the 22 of 23 language models with complete scores across 57 subject datasets. Each MMLU subject is treated as a dataset $D \in \mathcal{D}$ and the metric value $\phi(A,D)$ is the score of the HELM accuracy metric \textit{exact\_match} for each model $A$ and each subject $D$. For BBH, we use the results from the Open LLM Leaderboard on Hugging Face, which evaluates each task with the Eleuther AI Evaluation Harness \citep{eval-harness}. The leaderboard includes 4507 models, each assessed on 24 reasoning tasks. Each BBH task is treated as a dataset $D \in \mathcal{D}$ and the score $\phi(A,D)$ is \textit{acc\_norm}, the length-normalized accuracy. In both cases, the evaluation protocol is fixed across models and serves as the default protocol $P_0$. We retain models with scores on every task and average duplicate entries (for full preprocessing see Appendix~\ref{appexp}). This yields a complete $n \times m$ matrix with tasks as rows and models as columns. For each target model $A_1$ (i.e, each column of the constructed matrix), we compute $k_{\text{mean}}, k_{\text{median}}$, $k_{\text{win}}$ and $k_{\text{maj}}$. When a task is included in training, the target model's post-training score on that task is set to $1$. Additionally, we compute normalized robustness values, as the raw values $k_B(\phi^0, A_1)$ account not only for how vulnerable the aggregation rule but also for how much room the model has left to improve. A strong model may have a smaller deficit and therefore be easier to bring to the top of the ranking, while a weaker model may have a large deficit but also large available gains $G_D$. The raw robustness values mix these two effects.  We therefore divide $k_B(\phi^0, A_1)$ by a reference value reflecting how much improvement is potentially possible to the target model under that benchmark operator. For pairwise majority, we additionally verify the existence of a Condorcet winner, a model $A^*$ with $M(\phi^0, A^*, A) > M(\phi^0, A, A^*)$ for every $A\neq A^*$ under strict ranking after deterministic tie-breaking and find that one exists on both suites.

Conditional on the score matrix, the model set and the aggregation rule, each value $k_B(\phi^0,A_1)$ is deterministic. Uncertainty arises only when we summarize these values across target models, e.g., when estimating the fraction of models that can be made top-ranked after training on at most $K$ tasks, interpreting the observed target models as a sample from a broader population of models that could appear on the benchmark. In both suites,  multiple models share a developer or namespace: on MMLU, we have distinct generations of the same model and on BBH, many closely related variants with the same Hugging Face namespace. Treating every entry as independent would overstate our sample size. We therefore report 95\% confidence intervals from a bootstrap that resamples developer namespaces with replacement, recomputing the summary on each resample. For BBH, we additionally recompute all robustness values on the subset that retains only the model with the highest mean task score within each namespace. To compare aggregation rules, we use paired Wilcoxon signed-rank tests and apply Holm correction across rule pairs \citep{Wilcoxon, holm}.

\textbf{Results.}
On MMLU, the median number of tasks required to top the leaderboard is $16$ subjects ($28\%$) under mean, $23$ ($40\%$) under median, $29$ ($51\%$) under pairwise majority and $44.5$ ($78\%$) under mean win rate. Thus, for a typical target model one has to manipulate almost four fifths of the benchmark under mean win rate, compared with just over one quarter under mean. The fraction of models that can be made top-ranked by training on at most five subjects is $14\%$ ($95\%$ bootstrap CI $[0\%, 32\%]$) under mean, $9\%$ $[0\%, 18\%]$ under median and $5\%$ $[0\%, 12\%]$ under both pairwise majority and mean win rate. The confidence intervals are relatively wide, since HELM MMLU contains only a small number of developer namespaces. The paired Wilcoxon signed-rank tests indicate that the differences in robustness of aggregation rules are systematic within the 22 target models: all six rule pairs differ after Holm correction for multiple comparisons ($p_{\text{adj}}<0.001$; Appendix \ref{appexp}).

On BBH, the median robustness is $13$ ($54\%$) tasks under mean, $12$ ($50\%$) under median as well as pairwise majority and $22$ ($92\%$) under mean win rate. Thus, mean win rate requires the broadest manipulation: a typical target model must be improved on almost the entire BBH suite to become top-ranked. The normalized robustness analysis in Appendix \ref{appexp} yields the same conclusion: mean win rate has the largest median normalized robustness in both suites. The fraction of models that can be made top-ranked using at most five tasks is $0.98\%$ ($95\%$ bootstrap CI $[0.56\%,1.50\%]$) under mean, $1.04\%$ $[0.57\%,1.65\%]$ under median, $0.33\%$ $[0.12\%,0.60\%]$ under pairwise majority and $0.49\%$ $[0.23\%,0.81\%]$ under mean win rate. As BBH contains many uploads from the same namespace, we repeat the analysis after keeping only the model with the highest mean task score within each of the $718$ namespaces. The median robustness values remain largely unchanged (see Appendix \ref{appexp}). All paired rule comparisons are statistically significant for both analyses ($p_{\text{adj}}<0.001$).

\section{Discussion}\label{disc}
Aggregation rules determine how difficult benchmark manipulation is, so robustness to benchmark-specific training should be treated as a central criterion in benchmark design. Arithmetic mean is among the least robust rules in our framework: concentrated gains on relatively few tasks can be enough to change the leaderboard. Mean win rate requires broad manipulation across the suite, since the target model has to overtake competitors in pairwise comparisons across many tasks. Since manipulation risk is most consequential for models that are already competitive, this diagnostic is especially important near the top of the leaderboard. This robustness comes with a trade-off. Mean win rate depends on the set of models being compared \citep{zhang2024inherent}, while pairwise majority avoids this dependence for any fixed pair but can induce cyclic rankings \citep{condorcet1785}. In both suites we study, this issue does not arise at the top of the leaderboard, since a Condorcet winner exists. A limitation of our analysis is that the manipulation model is deliberately favorable to the developer: training affects only the target model, never lowers its scores outside the chosen subset $S$ and in our tests sets post-training scores to $1$. In practice, leaked data may improve performance only partially and degrade performance on other tasks \citep{zhou2023dontmakellmevaluation}. Extending the framework to include a cardinal budget for the size of metric gains is a natural direction for future work. The main conclusion remains that even under strong assumptions for the developer, aggregation rules differ sharply in how much of the benchmark must be manipulated.


\bibliographystyle{plainnat}
\bibliography{bibliography}
\newpage
\appendix

\section{Appendix}\label{app}
\subsection{Proofs}\label{proofs}
\begin{proposition}\label{isoapp}
For any monotone ordinal benchmark operator $\tilde{B}$, the BST problem is exactly the shift bribery problem with all-or-nothing prices under the identification of datasets $\mathcal{D}$ with voters $V$, models $\mathcal{A}$ with candidates $C$, the model $A_1$ with the preferred candidate $p$ and the operator $\tilde{B}$ with the election rule $E$.
\end{proposition}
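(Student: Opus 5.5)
The plan is to set up an explicit translation between instances of BST (with $\tilde{B}$ ordinal and monotone) and instances of shift bribery with all-or-nothing prices. I then show that this translation maps feasible solutions to feasible solutions bijectively, with identical costs and identical winning conditions.

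First I fix the dictionary. An election is given by $C:=\mathcal{A}$ and $V:=\mathcal{D}$, where voter $D$ has the strict order $\succ^0_D$ induced by $\phi^0$ after the fixed tie-breaking. The rule is $E:=\tilde{B}$, the preferred candidate is $p:=A_1$, voter $D$ has price $c_D$, and the budget $\beta$ is shared. Conversely, from a shift bribery instance I can build a score matrix $\phi^0$ realizing each voter's order, for instance $\phi^0(A,D):=(n-\mathrm{pos}_D(A))/n$. I would also set $G_D:=1-\phi^0(A_1,D)$, which is enough to put $A_1$ on top under tie-breaking in its favour. This gives a translation in both directions, so the correspondence is not merely an embedding.

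Next I compare the effect of a choice $S\subseteq\mathcal{D}$ with the effect of bribing exactly the voters in $S$. Take $D\notin S$. By the third bullet and the first bullet of the definition of $\phi^T$, every score on $D$ is unchanged, so $\succ^T_D=\succ^0_D$, which is exactly an unbribed voter. Take $D\in S$. Only the score of $A_1$ changes, so the relative order of all $A\neq A_1$ on $D$ is preserved. With $g_D=G_D$ and the maximal-gain assumption, $A_1$ moves to the top, so $\succ^T_D$ is the shift of $A_1$ to the top of $\succ^0_D$. The profile $R^T$ therefore equals the bribed profile. Because $B$ is ordinal, $B(\phi^T)=\tilde{B}(R^T)=E(\text{bribed profile})$. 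Hence $A_1$ is top in the BST outcome if and only if $p$ wins the election. The costs also agree, since $\sum_{D\in S}c_D$ is exactly the all-or-nothing bribery cost.

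Finally I handle the freedom on both sides, which I expect to be the main obstacle. In BST the developer may choose any $g_D\in[0,G_D]$. In shift bribery the briber may shift $p$ by any amount in a bribed vote at the same price. The reduction to "full gain / shift to top" uses monotonicity of $\tilde{B}$. Suppose some solution makes $A_1$ win using partial gains, or equivalently partial shifts. Replacing each with a shift to the top only moves $A_1$ upward in those rankings, at the same cost, so $A_1$ still wins. A yes-instance therefore stays a yes-instance when both problems are restricted to the canonical top-shift solutions. The delicate point is ties. I would argue that the fixed deterministic tie-breaking makes the post-training order on $D$ still a strict order obtained by shifting $A_1$ upward, and that "top element" under $\tilde{B}$ matches the notion of winner for $E$. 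Together these give equality of the yes-sets, and so the two problems coincide.
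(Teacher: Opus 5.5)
Your proposal is correct and follows essentially the same route as the paper: identify datasets with voters, show that training on $S$ with $g_D=G_D$ yields exactly the profile obtained by shifting $A_1$ to the top in the votes of $S$ (unchanged elsewhere, with the relative order of the other models preserved), and use monotonicity of $\tilde{B}$ to reduce arbitrary gains or shifts to top-shifts on both sides. Your explicit converse construction $\phi^0(A,D)=(n-\mathrm{pos}_D(A))/n$, $G_D=1-\phi^0(A_1,D)$ is a useful addition that the paper leaves implicit, and it is exactly what the NP-hardness corollary needs in order to map every shift-bribery instance to a BST instance; there $A_1$ lands strictly on top, since every other score is at most $(n-1)/n$, so no tie-breaking in its favour is required.
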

\begin{proof}
We identify datasets $\mathcal{D}$ with voters $V$, models $\mathcal{A}$ with candidates $C$, the target model $A_1$ with the preferred candidate $p$ and the ordinal benchmark operator $\tilde{B}$ with the election rule $E$. For each dataset $D \in \mathcal{D}$, define the all-or-nothing price function by $$\pi_D(0)=0, \ \ \pi_D(l)=c_D \ \ \forall \ l \geq 1,$$
which describes the price of shifting $A_1$ forward in the ranking induced by dataset $D$ by a given number of positions \citep{BREDERECK2016140}[Sec. 3.3]. The cost $c_D$ arises when $A_1$ is shifted upward in the ranking of $D$, while paying nothing leaves the ranking of $D$ unchanged.

A shift bribery action under all-or-nothing price function is specified by a set of voters to bribe. For each bribed voter, the preferred candidate $p$ may be shifted upward by any positive number of positions at fixed cost $c_D$, while the relative order of all other candidates remains the same.

It is enough to observe that every feasible subset $S \subseteq \mathcal{D}$ induces the same post-training profile as bribing exactly the voters corresponding to datasets in $S$ and shifting $A_1$ to the top of their rankings.

First, suppose that $S \subseteq \mathcal{D}$ is a feasible subset of benchmark datasets for BST, i.e., it holds that $\sum_{D \in S}c_D \leq \beta$ and training on $S$ makes $A_1$ the top element under $\tilde{B}$. As argued in Section~\ref{BSTform}, for ordinal monotone operators we
may assume without loss of generality that the model developer uses maximal gains
$g_D=G_D$ on every chosen dataset $D\in S$. By the assumption that
$G_D$ is large enough to move $A_1$ to the top of that dataset
ranking, training on $S$ makes $A_1$ top-ranked in $\succ_D^T$ for each $D \in S$. For each dataset $D \notin S$, the dataset rankings are unchanged, so $\succ^{T}_D=\succ^{0}_D$. Moreover, on every dataset in $\mathcal{D}$, the relative order of all models in $\mathcal{A} \setminus \{A_1\}$ remains unchanged. Therefore, bribing exactly the voters corresponding to datasets in $S$ and shifting $A_1$ to the top of their rankings produces the same profile of dataset rankings $R^T$. The cost of this shift bribery is $\sum_{D \in S}c_D$, which is at most $\beta$ because $S$ is feasible for BST. Furthermore, since the resulting profile is the same and $E=\tilde{B}$, the preferred candidate $p=A_1$ is a winner/top element under the election rule. It follows that the same choice of voters gives a feasible solution to the corresponding shift bribery problem.

Conversely, suppose there is a feasible all-or-nothing shift bribery action of cost at most $\beta$ that makes $p=A_1$ a winner under $E=\tilde{B}$. For each voter corresponding to a dataset $D$, any nonzero shift of $A_1$ costs $c_D$, regardless of how far $A_1$ is shifted. Since $\tilde{B}$ is monotone, moving $A_1$ further upward in any individual ranking
cannot lower its position in the aggregate ranking. Hence, we may assume without loss of generality
that every bribed voter shifts $A_1$ to the top of their ranking. Let $S$ be the set of datasets whose corresponding voters are bribed in this shift bribery action. Since the action has cost at most $\beta$, we have $\sum_{D \in S}c_D \leq \beta$. Now suppose we train on exactly the datasets in $S$ and set maximal gains $g_D=G_D$. This makes $A_1$ the top-ranked model on every $D \in S$. For every $D \notin S$, the rankings remain unchanged and for every dataset, the relative order of all models is unchanged. Thus, training on $S$ yields the same profile of dataset rankings as the successful shift bribery action. Since that profile makes $A_1$ the top element under $\tilde{B}$, the subset $S$ is feasible for BST. Therefore, feasible subsets for BST and feasible all-or-nothing shift bribery actions correspond to one another under the stated identification.
\end{proof}

\begin{corollary}\label{bordaapp}
The BST problem is NP-hard under $B_\text{Borda}$.
\end{corollary}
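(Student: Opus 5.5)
The plan is a direct reduction via Proposition~\ref{isoapp}. Borda count is a monotone ordinal operator: moving $A_1$ up in one dataset ranking raises its Borda score and weakly lowers every competitor's score, so its aggregate position cannot drop. Hence Proposition~\ref{isoapp} applies, and BST under $B_\text{Borda}$ coincides with Borda shift bribery with all-or-nothing prices under the identification datasets $\leftrightarrow$ voters, models $\leftrightarrow$ candidates, $A_1 \leftrightarrow p$. It then suffices to cite NP-hardness of Borda shift bribery with all-or-nothing prices, shown by \citet{BREDERECK2016140} (they prove W[1]/NP-hardness for Borda already with all-or-nothing prices and unit costs, building on \citet{Elkind2009}).

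Since Proposition~\ref{isoapp} is a correspondence of problems, I must also check that the reduction from an arbitrary shift bribery instance to a BST instance is polynomial. Given an election with candidates $C$, $n=|C|$, voters $V$ with strict orders, preferred $p$, prices $c_v$ and budget $\beta$, I build the BST instance as follows.
\begin{itemize}
\item Set $\mathcal{A}:=C$, $\mathcal{D}:=V$ and $A_1:=p$.
\item Define $\phi^0(A,D):=(n-\mathrm{pos}_D(A))/n\in[0,1]$, which induces exactly the voter's strict order $\succ_D$.
\item Choose $G_D:=1-\phi^0(A_1,D)$, so that training pushes $A_1$ to score $1$, strictly above every competitor (whose scores are at most $(n-1)/n$). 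This keeps all rankings strict and makes the large-gain assumption of Section~\ref{BSTform} hold.
\item Set $c_D:=c_v$ and keep the same $\beta$.
\end{itemize}
Every number here has polynomial size. Because Borda is ordinal, $B_\text{Borda}(\phi^T)$ depends only on $R^T$. Training on $S$ therefore produces exactly the profile obtained by bribing the voters in $S$ and shifting $p$ to the top, so yes-instances correspond.

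One subtlety is the winner notion. The paper's BST asks that $A_1$ be "a top element", i.e.\ a co-winner, and the tie-breaking convention is applied to dataset rankings, not to aggregate Borda ties. The hardness result I cite must therefore be for the nonunique-winner (co-winner) model. I would verify that \citet{BREDERECK2016140} state it this way, and otherwise adjust via the standard padding trick that converts unique-winner hardness into co-winner hardness. I expect this to be the main obstacle: matching the exact variant in the source (unit vs.\ arbitrary costs, unique vs.\ co-winner) to the BST definition. The remaining steps are routine bookkeeping.

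Finally, since $B_\text{Borda}$ and $B_\text{win}$ induce the same ranking under strict dataset orders (the mean win rate is an affine function of the Borda score), the same reduction transfers hardness to mean win rate. This gives the remark following the corollary.
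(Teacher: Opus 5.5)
Your proposal is correct and follows the paper's proof. Both invoke Proposition~\ref{isoapp} to identify BST under $B_\text{Borda}$ with all-or-nothing shift bribery for Borda and then import NP-hardness, which the paper takes from Elkind et al.'s Thm.~7 together with Bredereck et al.'s observation that those constructions already use all-or-nothing prices. Your explicit polynomial-size encoding of an election as a score matrix, the monotonicity check for Borda, and the co-winner versus unique-winner caveat are careful details the paper leaves implicit, not a different argument.
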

\begin{proof}
By Proposition~\ref{isoapp}, the BST problem under $B_\text{Borda}$ is exactly shift bribery with all-or-nothing prices under the Borda rule. \cite{Elkind2009}[Thm. 7] prove that shift bribery for Borda is NP-complete. \cite{BREDERECK2016140}[Sec. 3.3] observe that the hardness constructions of  \cite{Elkind2009} use all-or-nothing price functions. Hence, shift bribery for Borda remains NP-hard under all-or-nothing prices and therefore the BST problem under $B_\text{Borda}$ is NP-hard.
\end{proof}

\begin{theorem}\label{meanapp}
$k_\text{mean}(\phi^0, A_1)= \text{min} \Bigl\{k \in \{0, \dots, m \}: \sum_{i=1}^k G(i) \geq m \cdot \Delta_\text{mean} (\phi^0, A_1)\Bigl\}$ with $k_\text{mean}(\phi^0, A_1)=+ \infty$ if no such $k$ exists.
\end{theorem}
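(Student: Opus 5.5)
We prove the statement by showing two inclusions: any successful training set needs at least the claimed number of datasets, and the claimed number always suffices. The first step reduces being top under $B_\text{mean}$ to a single inequality. Training affects only $A_1$, so $\mu_{\phi^T}(A)=\mu_{\phi^0}(A)$ for every $A\neq A_1$, while $\mu_{\phi^T}(A_1)=\mu_{\phi^0}(A_1)+\frac{1}{m}\sum_{D\in S}g_D$. Hence $A_1$ is a top element of $B_\text{mean}(\phi^T)$ if and only if $\mu_{\phi^0}(A_1)+\frac1m\sum_{D\in S}g_D\ge \mu_{\phi^0}(A)$ for all $A\neq A_1$. Equivalently,
\[
\sum_{D\in S}g_D\ \ge\ m\cdot\max_{A\neq A_1}\bigl(\mu_{\phi^0}(A)-\mu_{\phi^0}(A_1)\bigr).
\]
The left side is nonnegative, so this is the same as $\sum_{D\in S}g_D\ge m\cdot\Delta_\text{mean}(\phi^0,A_1)$. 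The max with $0$ in the definition is harmless here, and including $A=A_1$ in the maximum contributes only the value $0$.

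The second step removes the choice of gains. For a fixed $S$, the left side is largest when $g_D=G_D$ for every $D\in S$. So there exist admissible gains making $A_1$ top under training on $S$ if and only if $\sum_{D\in S}G_D\ge m\Delta_\text{mean}$. Here we interpret "training on $S$" as allowing any admissible gains with $0\le g_D\le G_D$; the paper's WLOG argument for ordinal operators does not apply directly to the cardinal mean, but monotonicity of $\mu$ in each $g_D$ gives the same conclusion.

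The third step is an exchange argument. For any $S$ with $|S|=k$, we have $\sum_{D\in S}G_D\le\sum_{i=1}^k G_{(i)}$, because the sum of any $k$ of the values is at most the sum of the $k$ largest. This gives the lower bound: if $S$ is feasible with $|S|=k$, then $\sum_{i=1}^kG_{(i)}\ge m\Delta_\text{mean}$, so $k$ lies in the set and $k_\text{mean}\ge$ its minimum. For the upper bound, let $k^*$ be that minimum and take $S$ to be the datasets carrying the top $k^*$ gains, breaking ties arbitrarily. Then $S$ is feasible with $|S|=k^*$. When $\Delta_\text{mean}=0$ this gives $k^*=0$ with $S=\emptyset$, which is consistent.

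The infinite case follows from the same facts. No $k\le m$ satisfies the inequality exactly when $\sum_{D\in\mathcal D}G_D<m\Delta_\text{mean}$, since $\sum_{i=1}^k G_{(i)}$ is nondecreasing in $k$. In that case no $S\subseteq\mathcal D$ is feasible, and $k_\text{mean}=+\infty$ by Definition~\ref{robustdef}. No step is a real obstacle. The only care needed is in the first step, where "top element" must be read as weakly top (ties allowed), so that the inequality is $\ge$ and matches the statement.
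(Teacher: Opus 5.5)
Your proof is correct and follows the same route as the paper: you reduce top-ranking under $B_\text{mean}$ to $\sum_{D\in S} g_D \ge m\,\Delta_\text{mean}(\phi^0,A_1)$, then observe that the best set of size $k$ takes the $k$ largest $G_D$. Your explicit treatment of the gain choice (via monotonicity of $\mu$ in each $g_D$, since the ordinal WLOG does not apply to the cardinal mean) and of the weak-top reading makes precise what the paper leaves implicit.
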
 

\begin{proof}
 Since benchmark-specific training only affects the metric values of model $A_1$, we have $$\mu_{\phi^T}(A_1)=\mu_{\phi^0}(A_1)+ \frac{1}{m}\sum_{D \in \mathcal{D}} g_D, \ \ \ \mu_{\phi^T}(A)=\mu_{\phi^0}(A)\ \forall A \neq A_1 .$$ Thus, $A_1$ is the top element of $B_\text{mean}(\phi^T)$ iff $$\sum_{D \in \mathcal{D}} g_D \geq m \cdot \Delta_\text{mean}(\phi^0, A_1).$$ Among all subsets of datasets of size $k$, the maximal total improvement is $ \sum_{i=1}^k G(i)$ which is obtained by selecting the $k$ datasets with largest gains. Therefore, the smallest possible $k$ is exactly $\text{min} \Bigl\{k \in \{0, \dots, m \}: \sum_{i=1}^k G(i) \geq m \cdot \Delta_\text{mean} (\phi^0, A_1)\Bigl\}$. If no such $k\leq m$ exists, then $k_\text{mean}(\phi^0, A_1)=+ \infty$.
\end{proof}

\begin{theorem}\label{medianapp}
$k_\text{median}(\phi^0, A_1)=\Delta_{\text{median}}(\phi^0, A_1) \ \text{if} \ C_{\tau(\phi^0)}(\phi^0, A_1)\geq \Delta_{\text{median}}(\phi^0, A_1).$ If $C_{\tau(\phi^0)}(\phi^0, A_1)<  \Delta_{\text{median}}(\phi^0, A_1)$, then $k_\text{median}(\phi^0, A_1)=+\infty$.
\end{theorem}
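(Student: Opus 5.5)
The plan is to reduce "$A_1$ is the top element of $B_\text{median}(\phi^T)$" to a counting condition on datasets, and then minimize $|S|$ subject to that condition. First, since training only changes the scores of $A_1$, every competitor keeps its default median $\tilde{\phi}^0(A)$. Hence $A_1$ is top-ranked under $\phi^T$ iff $\tilde{\phi}^T(A_1)\geq \tau(\phi^0)$. The upper median $\phi^T_{(h)}(A_1)$ is at least $\tau(\phi^0)$ iff at most $h-1$ order statistics lie strictly below $\tau(\phi^0)$. Equivalently, at least $m-h+1$ datasets satisfy $\phi^T(A_1,D)\geq\tau(\phi^0)$. I will verify this order-statistic equivalence first, since everything else rests on it.

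Second, I will count how training on $S$ changes the number of datasets reaching the threshold. A dataset $D\notin S$ keeps $\phi^T(A_1,D)=\phi^0(A_1,D)$. A dataset $D\in S$ has $\phi^T(A_1,D)\le \phi^0(A_1,D)+G_D$, and without loss of generality equality holds: raising $g_D$ only raises $A_1$'s order statistics, and the median operator is monotone in $A_1$'s scores. Thus the number of datasets at or above the threshold after training is $N_{\tau(\phi^0)}(\phi^0,A_1)+|S\cap\mathcal{C}|$. Here $\mathcal{C}$ is the set of datasets counted by $C_{\tau(\phi^0)}(\phi^0,A_1)$, i.e.\ those that start below $\tau(\phi^0)$ but can be pushed to it. Datasets already at or above the threshold contribute nothing new, and datasets with $\phi^0(A_1,D)+G_D<\tau(\phi^0)$ can never cross it. So the success condition is $|S\cap\mathcal{C}|\geq \Delta_\text{median}(\phi^0,A_1)$. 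When the deficit is zero, this holds already for $S=\emptyset$.

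Third, I will read off both cases. If $C_{\tau(\phi^0)}\ge\Delta_\text{median}$, choose $S$ to be any $\Delta_\text{median}$ elements of $\mathcal{C}$. This $S$ is feasible, and any feasible $S$ satisfies $|S|\ge|S\cap\mathcal{C}|\ge\Delta_\text{median}$, so the minimum equals $\Delta_\text{median}$. If $C_{\tau(\phi^0)}<\Delta_\text{median}$, then $|S\cap\mathcal{C}|\le C_{\tau(\phi^0)}<\Delta_\text{median}$ for every $S$. No $S$ is feasible, and $k_\text{median}=+\infty$ by Definition~\ref{robustdef}.

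The main obstacle is making the first step rigorous, including ties. I expect two issues. One is the exact index bookkeeping with $h=\lfloor m/2\rfloor+1$: at least $m-h+1$ values must be $\ge\tau$ for the $h$-th smallest value to be $\ge\tau$. The other is that "top element" should mean weakly top, with $\tilde\phi^T(A_1)\ge\tau$. I will state explicitly that ties with the best competitor count as top-ranked, consistent with the text preceding Theorem~\ref{median}, and check that the $\max\{0,\cdot\}$ in $\Delta_\text{median}$ handles the case where the threshold is already met.
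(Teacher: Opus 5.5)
Your proposal is correct and follows essentially the same route as the paper. Both reduce top-rankedness to having at least $m-h+1$ datasets with $\phi^T(A_1,D)\geq\tau(\phi^0)$, note that only the $C_{\tau(\phi^0)}(\phi^0,A_1)$ crossable datasets can add to the $N_{\tau(\phi^0)}(\phi^0,A_1)$ already at or above the threshold (each adding at most one), and read off the two cases; your explicit chain $|S|\geq|S\cap\mathcal{C}|\geq\Delta_{\text{median}}(\phi^0,A_1)$ and your remark on weak top-rankedness under ties simply make precise what the paper states more informally.
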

\begin{proof}
The condition $\tilde{\phi}^T (A_1) \geq \tau (\phi^0)$ holds if and only if at least $m-h+1$ datasets $D \in \mathcal{D}$ satisfy $\phi^T (A_1,D) \geq \tau (\phi^0)$. After sorting the $m$ metric values, it holds that $\tilde{\phi}^T (A_1)=\phi^T_{(h)}(A_1)$, so the median reaches the threshold $\tau (\phi^0)$ exactly when the values in positions $h, h+1, \dots, m$ all reach the threshold and thus there are $m-h+1$ such values.

By definition, there are already $N_{\tau(\phi^0)}(\phi^0, A_1)$ datasets on which $A_1$ reaches $\tau (\phi^0)$ before training and these datasets continue
to reach it after training. Among the remaining datasets, exactly $C_{\tau(\phi^0)}(\phi^0, A_1)$ datasets can be pushed to at least $\tau (\phi^0)$ by benchmark-specific training. All other datasets remain below $\tau (\phi^0)$ even under maximal gain. Therefore, the only useful datasets for increasing the median are the $C_{\tau(\phi^0)}(\phi^0, A_1)$ datasets that can be pushed across the
threshold and each of those datasets can contribute exactly one new metric value that reaches $\tau (\phi^0)$. Then, the number of additional datasets needed on which $A_1$ must reach $\tau (\phi^0)$ is $$\Delta_{\text{median}}(\phi^0, A_1):=\text{max} \{0, m-h+1-N_{\tau(\phi^0)}(\phi^0, A_1) \}.$$ If $C_{\tau(\phi^0)}(\phi^0, A_1)\geq \Delta_{\text{median}}(\phi^0, A_1)$, then choosing any $\Delta_{\text{median}}(\phi^0, A_1)$ datasets suffices. No smaller subset can work, since each chosen dataset can add at most one new value at or above the threshold. Hence, $$k_\text{median}(\phi^0, A_1)=\Delta_{\text{median}}(\phi^0, A_1).$$ If $C_{\tau(\phi^0)}(\phi^0, A_1)< \Delta_{\text{median}}(\phi^0, A_1)$, then even training on all datasets that can potentially be pushed across $\tau (\phi^0)$ does not suffice for $\tilde{\phi}^T (A_1) \geq \tau (\phi^0)$ to hold. Thus, no subset $S$ is sufficient and $k_\text{median}(\phi^0, A_1)=+\infty$.
\end{proof}

\begin{lemma}\label{windiffapp}
After benchmark-specific training on $S$ with $g_D=G_D$ for all $D \in S$ it holds that $$w(\phi^T, A) -w(\phi^T, A_1)=d_\text{win}(\phi^0, A_1,A) - \frac{1}{m}\sum_{D \in S}q_D(A) \ \ \text{for all}  \ A \neq A_1.$$    
\end{lemma}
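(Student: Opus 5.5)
The identity follows from a dataset-by-dataset decomposition combined with a locality observation about how training on $S$ affects the pair $(A_1,A)$. Fix $A\neq A_1$. I start from the definition of mean win rate and write
\[
w(\phi^T,A)-w(\phi^T,A_1)=\frac{1}{m}\sum_{D\in\mathcal{D}}\bigl(w_D(\phi^T,A)-w_D(\phi^T,A_1)\bigr).
\]
I then add and subtract the default per-dataset win rates. This gives
\[
\frac{1}{m}\sum_{D}\bigl(w_D(\phi^0,A)-w_D(\phi^0,A_1)\bigr)-\frac{1}{m}\sum_{D}\Bigl[\bigl(w_D(\phi^T,A_1)-w_D(\phi^0,A_1)\bigr)+\bigl(w_D(\phi^0,A)-w_D(\phi^T,A)\bigr)\Bigr].
\]
The first sum is exactly $d_\text{win}(\phi^0,A_1,A)$. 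It therefore remains to show that the bracketed term equals $q_D(A)$ for $D\in S$ and vanishes for $D\notin S$.

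For $D\notin S$, the bracket vanishes because $\phi^T(\cdot,D)=\phi^0(\cdot,D)$ for all models: the unchanged-performance condition covers $A_1$, and only-the-target-is-affected covers everything else. Hence $w_D(\phi^T,\cdot)=w_D(\phi^0,\cdot)$ and both differences in the bracket are zero.

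For $D\in S$, the key step is locality. $w_D(\phi,A)$ depends only on the column $\phi(\cdot,D)$, so the post-training win rates on $D$ depend only on $\phi^T(\cdot,D)$. Now compare training on all of $S$ with training on $\{D\}$ alone, which is the scenario used to define $q_D(A)$. In both cases we take $g_D=G_D$, and all other models keep their $\phi^0$ values. So the column $\phi^T(\cdot,D)$ is identical in the two scenarios: it is $\phi^0(\cdot,D)$ with $A_1$'s entry replaced by $\phi^0(A_1,D)+G_D$. Consequently the bracket for training on $S$ coincides with the bracket defining $q_D(A)$ under $S=\{D\}$. Summing over $D\in S$ then yields the claimed formula.

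I expect the main obstacle to be stating this locality cleanly, since $q_D(A)$ is defined via the single-dataset training $S=\{D\}$ while the lemma concerns an arbitrary $S$. The decisive point is that $\phi^T(\cdot,D)$ under training on $S$ is determined solely by whether $D\in S$ and by $G_D$. This rests on fixing $g_D=G_D$, which is exactly the stated hypothesis, so the two column vectors agree entrywise. No tie-breaking subtleties arise, because the win-rate indicators are evaluated on identical score columns.
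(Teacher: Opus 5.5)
Your proof is correct and follows essentially the same route as the paper: a per-dataset decomposition in which datasets outside $S$ contribute nothing and each $D\in S$ shifts the difference by exactly $-q_D(A)/m$. Your explicit locality remark — that the column $\phi^T(\cdot,D)$ is the same under training on $S$ as under training on $\{D\}$ alone, since $g_D=G_D$ in both — fills in a step the paper's proof asserts without justification.
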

\begin{proof}
By definition of the benchmark specific training, $\phi^{0}(A, D)=\phi^{T}(A,D)$ for all $A\neq A_1$ and all $D \in \mathcal{D}$. For $D \notin S$, $\phi^{0}(A_1, D)=\phi^{T} (A_1,D)$, so all win rates for these datasets remain unchanged. For $D \in S$, the benchmark specific training changes the difference in win rate of $A$ and $A_1$ as compared to the default evaluation protocol by $$(w_D(\phi^T, A)-w_D(\phi^0, A)) - (w_D(\phi^T, A_1)-w_D(\phi^0,A_1))=-q_D(A).$$ Averaging over all datasets $D\in \mathcal{D}$ yields $$(w(\phi^T, A)-w(\phi^T, A_1))=(w(\phi^0, A) -w(\phi^0, A_1))-\frac{1}{m}\sum_{D \in S}q_D(A).$$ Substituting the definition of $d_\text{win}$ gives the result.
\end{proof}

\begin{theorem}\label{pairwisewinapp}
$k_\text{win}(\phi^0, A_1,A)=\text{min}\Bigl\{k \in \{0, \dots, m \}: \sum_{i=1}^k q_{(i)}(A) \geq m \cdot \Delta_{\text{win}}(\phi^0, A_1,A) \Bigl\}$ with $k_\text{win}(\phi^0, A_1, A)=+ \infty$ if no such $k$ exists.
\end{theorem}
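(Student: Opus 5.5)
The plan is to reduce the statement to a knapsack-type covering argument with unit costs, using Lemma~\ref{windiffapp} as the only structural input. By Lemma~\ref{windiffapp}, for any $S\subseteq\mathcal{D}$ trained with maximal gains, $A_1$ is ranked at least as high as $A$ under $B_\text{win}(\phi^T)$, i.e.\ $w(\phi^T,A_1)\geq w(\phi^T,A)$, if and only if $\sum_{D\in S}q_D(A)\geq m\cdot d_\text{win}(\phi^0,A_1,A)$. So $k_\text{win}(\phi^0,A_1,A)$ is the minimum cardinality of a set $S$ satisfying this inequality. The WLOG reduction to $g_D=G_D$ is already justified in Section~\ref{BSTform} via monotonicity of the ordinal operator $B_\text{win}$.

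First I would replace $d_\text{win}$ by $\Delta_\text{win}$. If $d_\text{win}\le 0$, then $S=\emptyset$ already satisfies the inequality, because every $q_D(A)\ge 0$. Hence the minimum is $0$, which matches the formula with $k=0$: the empty sum is $0\ge 0 = m\Delta_\text{win}$. If $d_\text{win}>0$, the two thresholds coincide. In both cases the feasibility condition can be written with $\Delta_\text{win}$, since any $S$ is feasible whenever $\Delta_\text{win}=0$.

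Second comes the exchange step. For a fixed size $k$, the maximum of $\sum_{D\in S}q_D(A)$ over $|S|=k$ equals $\sum_{i=1}^k q_{(i)}(A)$, attained by choosing the $k$ datasets with the largest pairwise gains. Hence some $S$ of size $k$ is feasible if and only if $\sum_{i=1}^k q_{(i)}(A)\ge m\Delta_\text{win}$. Because the $q_{(i)}(A)$ are nonnegative, these prefix sums are nondecreasing in $k$, so the feasible $k$ form an upward-closed set. Its minimum is therefore exactly the stated $\min$.

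If even $k=m$ fails, then no subset works, because every subset's sum is at most the full sum, and so the value is $+\infty$. The one point that needs care is the claim $q_D(A)\ge 0$, which underlies both the $\Delta$ replacement and the monotonicity of the prefix sums. Raising $A_1$'s score on $D$ can only increase $w_D(\cdot,A_1)$, and it can only flip indicators $\mathbf{1}\{\phi(A,D)\ge\phi(A_1,D)\}$ from $1$ to $0$ for competitors $A$, so it does not increase $w_D(\cdot,A)$. I would state this briefly and otherwise regard the argument as routine.
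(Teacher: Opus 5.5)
Your proposal is correct and follows the paper's proof essentially step for step. Both use Lemma~\ref{windiffapp} to get the pairwise criterion $\sum_{D\in S}q_D(A)\ge m\cdot d_\text{win}(\phi^0,A_1,A)$, replace $d_\text{win}$ by $\Delta_\text{win}$ using $q_D(A)\ge 0$, take the top-$k$ sum as the maximum over sets of size $k$, and assign $+\infty$ when even $k=m$ fails. Your explicit justification of $q_D(A)\ge 0$ and your handling of the boundary case $d_\text{win}=0$ are small improvements; the paper's phrase ``otherwise $\Delta_\text{win}=d_\text{win}>0$'' passes over that case silently.
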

\begin{proof}
By Lemma~\ref{windiffapp}, $A_1$ is ranked at least as high as $A$ under $B_\text{win}(\phi^T)$ if and only if $\sum_{D \in S}q_D(A)\geq m \cdot d_\text{win}(\phi^0, A_1,A).$ If $d_\text{win}(\phi^0, A_1,A)<0$, then the above inequality holds for $S=\emptyset$ since all $q_D(A)\geq 0$. Hence, $k_\text{win}(\phi^0, A_1,A)=0$. Otherwise $\Delta_{\text{win}}(\phi^0, A_1,A)=d_{\text{win}}(\phi^0, A_1,A)>0$ and since $q_D(A)\geq 0$ for every $D$, the maximum of $\sum_{D \in S} q_{D}(A)$ over subsets $S$ of size $k$ is $ \sum_{i=1}^k q_{(i)}(A)$, which is achieved by selecting the $k$ datasets with largest pairwise gains. Thus, the smallest $k$ for which some $S$ with $|S|=k$ satisfies $\sum_{D \in S}q_D(A)\geq m \cdot d_\text{win}(\phi^0, A_1,A)$ is exactly $\text{min}\Bigl\{k \in \{0, \dots, m \}: \sum_{i=1}^k q_{(i)}(A) \geq m \cdot \Delta_{\text{win}}(\phi^0, A_1,A) \Bigl\}$. If no $k\leq m$ achieves this, then no $S \subseteq\mathcal{D}$ does either and $k_\text{win}(\phi^0, A_1, A)=+ \infty$.
\end{proof}

\begin{theorem}\label{globalwinapp}
$k_\text{win}(\phi^0, A_1)=\text{min}\sum_{D\in\mathcal{D}}x_D \ \ \text{s.t.} \ \ \sum_{D \in \mathcal{D}}q_D(A) x_D\geq m \cdot \Delta_\text{win}(\phi^0, A_1,A) \ \forall A \in \mathcal{A}\setminus \{ A_1\}$  with $k_\text{win}(\phi^0, A_1)=+ \infty$ if the program is infeasible.
\end{theorem}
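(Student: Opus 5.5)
The plan is to reduce the global statement to the pairwise characterization in Lemma~\ref{windiffapp}, applied simultaneously to all competitors, and then read the resulting system of conditions as the stated covering program. The first step is to fix the meaning of ``top element'' for $B_\text{win}(\phi^T)$. Since $B_\text{win}$ ranks models by mean win rate, $A_1$ is a top element exactly when $w(\phi^T,A_1)\geq w(\phi^T,A)$ for every $A\neq A_1$. Because $B_\text{win}$ is ordinal and monotone, we may assume $g_D=G_D$ on every $D\in S$, so Lemma~\ref{windiffapp} applies.

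By Lemma~\ref{windiffapp}, for each fixed $A$ the condition $w(\phi^T,A_1)\geq w(\phi^T,A)$ is equivalent to
\[
\sum_{D\in S}q_D(A)\geq m\cdot d_\text{win}(\phi^0,A_1,A).
\]
We next replace $d_\text{win}$ by $\Delta_\text{win}$. If $d_\text{win}\leq 0$, the inequality holds for every $S$, because all $q_D(A)\geq 0$; the constraint with right-hand side $m\cdot\Delta_\text{win}=0$ is then trivially satisfied as well. If $d_\text{win}>0$, the two right-hand sides coincide. Hence, for every $A$, the two constraints define the same family of subsets $S$.

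Writing $x_D=\mathbf{1}\{D\in S\}$ gives $\sum_{D\in S}q_D(A)=\sum_{D}q_D(A)x_D$. This yields a bijection between subsets $S\subseteq\mathcal{D}$ and binary vectors $x\in\{0,1\}^m$, under which $|S|=\sum_D x_D$. So $S$ makes $A_1$ top-ranked if and only if its indicator vector $x$ satisfies all $n-1$ constraints. Minimizing $|S|$ over successful $S$ is therefore the same as minimizing $\sum_D x_D$ over feasible binary $x$. If no successful $S$ exists, the program is infeasible, and both sides equal $+\infty$ by Definition~\ref{robustdef} and the stated convention.

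The only delicate point is the reading of ``top element'' when there are ties in mean win rate. I would state explicitly that a weak inequality suffices, since a tied model is still a top element of a preference relation; this matches the weak inequality used in the pairwise case. The rest is bookkeeping.
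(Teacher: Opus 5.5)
Your proposal is correct and follows essentially the same route as the paper: apply Lemma~\ref{windiffapp} to each competitor, replace $d_\text{win}$ by $\Delta_\text{win}$ using $q_D(A)\geq 0$, and encode $S$ by its indicator vector, with infeasibility corresponding to $k_\text{win}=+\infty$. Your explicit remark that a weak inequality suffices for $A_1$ to be a top element matches the paper's ``ranked at least as high as'' reading.
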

\begin{proof}
By Lemma~\ref{windiffapp}, $A_1$ is ranked at least as high as $A$ under $B_\text{win}(\phi^T)$ if and only if $\sum_{D \in S}q_D(A)\geq m \cdot \Delta_\text{win}(\phi^0, A_1,A)$; when  $d_\text{win}(\phi^0, A_1,A)\leq 0$, then the inequality is automatic because $q_D(A)\geq 0$, so we can replace $d_\text{win}(\phi^0, A_1,A)$ by $\Delta_\text{win}(\phi^0, A_1,A)$. Hence, $k_\text{win}(\phi^0, A_1)$ is the minimum cardinality of the subset $S \subseteq \mathcal{D}$ such that $\sum_{D \in S}q_D(A)\geq m \cdot \Delta_\text{win}(\phi^0, A_1,A)$ holds for all $A \in \mathcal{A}\setminus \{A_1\}$. Introducing $x_D=1$ if $D\in S$ and $x_D=0$ if $D \notin S$ yields the stated program. If there is no $S\subseteq \mathcal{D}$ such that all $n-1$ constraints are satisfied simultaneously, then $k_\text{win}(\phi^0, A_1)=+ \infty$.
\end{proof}

\begin{proposition}\label{lowerboundwinapp}
$k_{\text{win}}(\phi^0, A_1) \geq \text{max}_{A \in \mathcal{A}\setminus \{A_1\}} k_{\text{win}}(\phi^0, A_1, A)$.  
\end{proposition}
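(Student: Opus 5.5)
The plan is to compare feasible sets. By Theorem~\ref{globalwinapp}, $k_{\text{win}}(\phi^0, A_1)$ is the minimum of $|S|$ over the set $\mathcal{F}$ of subsets $S \subseteq \mathcal{D}$ satisfying all $n-1$ constraints $\sum_{D \in S} q_D(A) \geq m \cdot \Delta_{\text{win}}(\phi^0, A_1, A)$ at once. For a fixed competitor $A$, Lemma~\ref{windiffapp} together with the argument in the proof of Theorem~\ref{pairwisewinapp} shows that $k_{\text{win}}(\phi^0, A_1, A)$ is the minimum of $|S|$ over the larger set $\mathcal{F}_A$ of subsets satisfying only the constraint for $A$. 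The replacement of $d_{\text{win}}$ by $\Delta_{\text{win}}$ is harmless because $q_D(A) \geq 0$. So I will first record the inclusion $\mathcal{F} \subseteq \mathcal{F}_A$ for every $A \neq A_1$.

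The key step is that a minimum over a subset is at least the minimum over the superset. If $\mathcal{F} = \emptyset$, then $k_{\text{win}}(\phi^0, A_1) = +\infty$ and the inequality is trivial. Otherwise, take an optimal $S^* \in \mathcal{F}$. It lies in every $\mathcal{F}_A$, so $k_{\text{win}}(\phi^0, A_1, A) \leq |S^*| = k_{\text{win}}(\phi^0, A_1)$ for each $A$. In this case each pairwise value is finite, and taking the maximum over the finitely many $A \in \mathcal{A} \setminus \{A_1\}$ gives the claim.

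The only point needing care is that $k_{\text{win}}(\phi^0, A_1, A)$, defined in Theorem~\ref{pairwisewinapp} through sorted gains $q_{(i)}(A)$, equals the minimum cardinality over $\mathcal{F}_A$ rather than just an upper bound on it. This follows because the top-$k$ sum of the $q_D(A)$ is the maximal value of $\sum_{D \in S} q_D(A)$ over $|S| = k$, as that proof shows. I do not expect any genuine obstacle.
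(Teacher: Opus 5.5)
Your proposal is correct and follows essentially the same argument as the paper: every set feasible for the global program in Theorem~\ref{globalwinapp} satisfies each pairwise constraint, so by Theorem~\ref{pairwisewinapp} its size is at least each $k_{\text{win}}(\phi^0,A_1,A)$, and minimizing over feasible sets gives the bound. You additionally treat the infeasible case ($k_{\text{win}}(\phi^0,A_1)=+\infty$) explicitly and justify why the sorted-gains formula equals the minimum cardinality over pairwise-feasible sets; the paper leaves both points implicit.
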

\begin{proof}
Let $x$ be any feasible solution for the integer problem in  Theorem~\ref{globalwinapp} and let $S\subseteq\mathcal{D}$ be the set of datasets selected by $x$. Then, $|S|=\sum_{D\in\mathcal{D}}x_D$ and, for every $A \in \mathcal{A}\setminus \{A_1\}$, feasibility of $x$ implies $\sum_{D \in S}q_D(A)\geq m \cdot \Delta_\text{win}(\phi^0, A_1,A)$, since these are the constraints of the program. Hence, $S$ is feasible for each pairwise problem. By Theorem~\ref{pairwisewinapp}, for every fixed $A \in \mathcal{A}\setminus \{A_1\}$ any $S$ must satisfy $|S|\geq k_{\text{win}}(\phi^0, A_1, A)$. Thus, $|S|\geq \text{max}_{A \in \mathcal{A}\setminus \{A_1\}}k_{\text{win}}(\phi^0, A_1, A)$. Since this holds for every feasible $S$, taking the minimum over all such $S$ (by Definition~\ref{robustdef}) yields $k_{\text{win}}(\phi^0, A_1) \geq \text{max}_{A \in \mathcal{A}\setminus \{A_1\}} k_{\text{win}}(\phi^0, A_1, A)$.
\end{proof}

\begin{theorem}\label{pairwisemajapp}
$k_{\text{maj}}(\phi^0,A_1,A)=\Delta_{\text{maj}}(\phi^0,A_1,A)$.    
\end{theorem}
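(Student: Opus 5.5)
We prove the two inequalities $k_{\text{maj}}(\phi^0,A_1,A)\le\Delta_{\text{maj}}(\phi^0,A_1,A)$ and $k_{\text{maj}}(\phi^0,A_1,A)\ge\Delta_{\text{maj}}(\phi^0,A_1,A)$ separately. Both rest on one formula for how training on $S$ changes the pairwise count. Fix $A\neq A_1$. Since $\phi^T(A,D)=\phi^0(A,D)$ for every $D$, the indicator $\mathbf{1}\{\phi^T(A_1,D)\ge\phi^T(A,D)\}$ is unchanged for $D\notin S$. For $D\in S$, the maximal-gain assumption from Section~\ref{BSTform} (with $g_D=G_D$, which is without loss of generality by monotonicity) makes $A_1$ top-ranked on $D$. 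The indicator is therefore $1$ after training, whatever it was before. Hence
\[
M(\phi^T,A_1,A)=m-|L^0(A)\setminus S| = m-|L^0(A)|+|S\cap L^0(A)|.
\]
If ties are broken deterministically to strict orders, "top-ranked" gives $\phi^T(A_1,D)>\phi^T(A,D)$; with weak inequalities, $\ge$ suffices. Either way the indicator equals $1$.

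\emph{Lower bound.} Suppose $M(\phi^T,A_1,A)\ge\mu$. By the display this means $|S\cap L^0(A)|\ge \mu-(m-|L^0(A)|)$, and trivially $|S\cap L^0(A)|\ge 0$. So $|S|\ge|S\cap L^0(A)|\ge \max\{0,\mu-(m-|L^0(A)|)\}=\Delta_{\text{maj}}(\phi^0,A_1,A)$. This step also explains why only tasks in $L^0(A)$ help: training on $D\notin L^0(A)$ leaves $M$ unchanged.

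\emph{Upper bound.} If $\Delta_{\text{maj}}=0$, take $S=\emptyset$. Otherwise we need $|L^0(A)|\ge \Delta_{\text{maj}}$ in order to pick $S\subseteq L^0(A)$ of that size. This follows from $\mu\le m$: indeed $\Delta_{\text{maj}}=\mu-m+|L^0(A)|\le |L^0(A)|$. With such an $S$ the display gives $M(\phi^T,A_1,A)=m-|L^0(A)|+\Delta_{\text{maj}}=\mu$, so the condition holds and $k_{\text{maj}}$ is finite.

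The main obstacle is not computational. It is making the per-task claim airtight: on each trained task $A_1$ must now weakly beat $A$. This is exactly where the maximal-gain assumption is used (as the footnote in Section~\ref{BSTform} warns), so it should be invoked explicitly, together with the convention on ties in $\mathbf{1}\{\phi(A,D)\ge\phi(A',D)\}$.
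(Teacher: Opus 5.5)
Your proof is correct and follows essentially the same route as the paper: the identity $M(\phi^T,A_1,A)=m-|L^0(A)|+|S\cap L^0(A)|$ under maximal gains, then $|S|\ge|S\cap L^0(A)|$, and the existence of a subset of $L^0(A)$ of size $\Delta_{\text{maj}}(\phi^0,A_1,A)$ because $\mu\le m$. Your aside on ties slightly overstates things, since deterministic tie-breaking need not yield a strict metric inequality, but the indicator equals $1$ either way, so the argument is unaffected.
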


\begin{proof}
For every $D\in S$, the indicator $\mathbf{1}\{\phi^T(A_1,D)\geq\phi^T(A,D)\}$ is non-decreasing in $g_D$, this implies that we can set $g_D=G_D$ on every $D\in S$ without loss of generality. This makes $A_1$ top-ranked in $\succ_D^T$, for each $D\in S$. Hence, this indicator equals $1$ for every $D\in S$, equals $0$ for $D\in L^0(A)\setminus S$ and equals $1$ for every $D \notin L^0(A)\cup S$. Therefore, $$M(\phi^T,A_1,A)=M(\phi^0,A_1,A) + |S\cap L^0(A)| = m - |L^0(A)| + |S\cap L^0(A)|.$$ Then, the condition $M(\phi^T,A_1,A)\geq \mu$ becomes $|S\cap L^0(A)|\geq \Delta_{\text{maj}}(\phi^0,A_1,A)$. Since datasets outside $L^0(A)$ cannot increase the pairwise majority count against the competing model $A$, the minimum is attained by choosing any $S\subseteq L^0(A)$ of size $\Delta_{\text{maj}}(\phi^0,A_1,A)$. Such a set exists because $\mu\leq m$ implies $\Delta_{\text{maj}}(\phi^0,A_1,A)\leq |L^0(A)|$.
\end{proof}

\begin{theorem}\label{globalmajapp}
$k_{\text{maj}}(\phi^0,A_1)=\text{min}\sum_{D\in\mathcal{D}}x_D \ \ \text{s.t.} \ \ \sum_{D \in L^0(A)}x_D \geq \Delta_\text{maj}(\phi^0, A_1,A) \ \forall A \in \mathcal{A}\setminus \{ A_1\}$    
\end{theorem}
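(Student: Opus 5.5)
The plan is to reduce the global statement to the pairwise characterization in Theorem~\ref{pairwisemajapp}, in the same way Theorem~\ref{globalwinapp} was obtained from Lemma~\ref{windiffapp}. By definition, $k_{\text{maj}}(\phi^0,A_1)$ is the minimum $|S|$ such that $A_1$ is a weak Condorcet winner of $\phi^T$. This means $M(\phi^T,A_1,A)\geq\mu$ must hold simultaneously for every $A\in\mathcal{A}\setminus\{A_1\}$, with one common training set $S$. So the first step is to rewrite each of these $n-1$ conditions as a linear condition on the indicator vector of $S$.

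For a fixed competitor $A$, I reuse the computation in the proof of Theorem~\ref{pairwisemajapp}. The indicator $\mathbf{1}\{\phi^T(A_1,D)\geq\phi^T(A,D)\}$ is non-decreasing in $g_D$, and by the maximal-gain assumption from Section~\ref{BSTform} we may take $g_D=G_D$, which makes $A_1$ top-ranked on each $D\in S$. Then
\[
M(\phi^T,A_1,A)=m-|L^0(A)|+|S\cap L^0(A)|.
\]
Hence $M(\phi^T,A_1,A)\geq\mu$ holds if and only if $|S\cap L^0(A)|\geq\mu-(m-|L^0(A)|)$. Since the left side is nonnegative, this is equivalent to $|S\cap L^0(A)|\geq\Delta_{\text{maj}}(\phi^0,A_1,A)$. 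Setting $x_D=\mathbf{1}\{D\in S\}$ gives $|S\cap L^0(A)|=\sum_{D\in L^0(A)}x_D$ and $|S|=\sum_D x_D$.

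The second step is the bijection argument. Binary vectors $x\in\{0,1\}^{\mathcal{D}}$ correspond one-to-one with subsets $S$. By the first step, $x$ is feasible for the program exactly when training on $S$ makes $A_1$ a weak Condorcet winner. Therefore the minimum of the objective over feasible $x$ equals the minimum $|S|$ in the definition of $k_{\text{maj}}$.

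Unlike Theorem~\ref{globalwinapp}, no infinite case arises, and I would note this explicitly. Taking $S=\mathcal{D}$ gives $|S\cap L^0(A)|=|L^0(A)|\geq\Delta_{\text{maj}}(\phi^0,A_1,A)$, because $\mu\leq m$. So the program is always feasible and the minimum is attained.

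The only delicate point is the monotonicity or without-loss-of-generality step, which justifies evaluating each $D\in S$ at full gain. The issue is that a single choice of gains must serve all competitors simultaneously. This works because setting $g_D=G_D$ weakly improves every pairwise indicator at once, and under the assumption it makes $A_1$ top on $D$ against all $A$ simultaneously. I expect this to be the main step needing care; the rest is bookkeeping.
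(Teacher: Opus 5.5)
Your proposal is correct and follows essentially the same route as the paper's proof. Both reduce each competitor's condition $M(\phi^T,A_1,A)\geq\mu$ to $|S\cap L^0(A)|\geq\Delta_{\text{maj}}(\phi^0,A_1,A)$ using the computation from Theorem~\ref{pairwisemajapp}, encode $S$ by the indicators $x_D$, and note that $x_D=1$ for all $D$ is feasible. Your explicit remark that the full gains $g_D=G_D$ serve all competitors simultaneously is a point the paper's proof leaves implicit, but it does not change the argument.
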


\begin{proof}
By Theorem~\ref{pairwisemajapp}, $M(\phi^T,A_1,A)\geq \mu$ holds if and only if $|S\cap L^0(A)| \geq \Delta_{\text{maj}}(\phi^0,A_1,A)$. Thus, $k_{\text{maj}}(\phi^0,A_1)$ is the minimum cardinality of a subset $S \subseteq \mathcal{D}$ for which this inequality holds for every $A \in \mathcal{A} \setminus \{A_1\}$ simultaneously. Setting $x_D=1$ if $D \in S$ and $x_D=0$ otherwise yields the stated program. Setting $x_D=1$ for every $D \in \mathcal{D}$ satisfies every constraint, so the program is always feasible.    
\end{proof}

\begin{proposition}\label{lowerboundmajapp}
$k_{\text{maj}}(\phi^0, A_1) \geq \text{max}_{A \in \mathcal{A}\setminus \{A_1\}} \Delta_\text{maj}(\phi^0, A_1, A)$.  
\end{proposition}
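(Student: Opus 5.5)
The plan mirrors the proof of Proposition~\ref{lowerboundwinapp}, with the integer program of Theorem~\ref{globalmajapp} in place of the mean win rate program. I fix an arbitrary feasible solution $x$ of that program and let $S:=\{D\in\mathcal{D}: x_D=1\}$, so that $|S|=\sum_{D\in\mathcal{D}}x_D$. Such an $x$ always exists, since Theorem~\ref{globalmajapp} shows the program is feasible (take $x_D=1$ for all $D$). Hence $k_{\text{maj}}(\phi^0,A_1)$ is a finite minimum and no $+\infty$ case arises.

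Next I fix any competitor $A\in\mathcal{A}\setminus\{A_1\}$. Feasibility of $x$ gives
\[
|S|\;\geq\;|S\cap L^0(A)|\;=\;\sum_{D\in L^0(A)}x_D\;\geq\;\Delta_{\text{maj}}(\phi^0,A_1,A).
\]
Alternatively, by the proof of Theorem~\ref{pairwisemajapp}, $S$ achieves $M(\phi^T,A_1,A)\geq\mu$, so $|S|\geq k_{\text{maj}}(\phi^0,A_1,A)=\Delta_{\text{maj}}(\phi^0,A_1,A)$ by Theorem~\ref{pairwisemajapp}. Either route gives the bound. The direct counting route is cleaner because it needs only $S\cap L^0(A)\subseteq S$.

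Since $A$ was arbitrary, $|S|\geq\max_{A\neq A_1}\Delta_{\text{maj}}(\phi^0,A_1,A)$ holds for every feasible $S$. Minimizing over feasible $S$ then yields the claim via Theorem~\ref{globalmajapp}. There is no real obstacle here. The only points to handle carefully are the finiteness and feasibility remark above, and the case of a single model ($n=1$): there the maximum over an empty set should be read as $0$, and the bound is trivial.
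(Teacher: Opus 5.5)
Your proof is correct and follows the paper's argument: fix a feasible solution of the program in Theorem~\ref{globalmajapp}, bound $|S|\geq|S\cap L^0(A)|\geq\Delta_{\text{maj}}(\phi^0,A_1,A)$ for each competitor $A$, then take the maximum over $A$ and the minimum over feasible $S$. Your remarks on feasibility (so that $k_{\text{maj}}$ is finite) and on the degenerate case $n=1$ are harmless additions that the paper leaves implicit.
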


\begin{proof}
Let $x$ be any feasible solution for the integer problem in  Theorem~\ref{globalmajapp} and let $S\subseteq\mathcal{D}$ be the corresponding subset. For every $A \in \mathcal{A} \setminus \{A_1\}$, feasibility of $x$ implies $|S \cap L^0(A)|\geq \Delta_{\text{maj}}(\phi^0,A_1,A)$. Hence, $|S| \geq |S \cap L^0(A)| \geq \Delta_{\text{maj}}(\phi^0,A_1,A)=k_{\text{maj}}(\phi^0,A_1,A)$ by Theorem~\ref{pairwisemajapp}. Taking the maximum over $A$ and the minimum over feasible $S$ yields the lower bound. 
\end{proof}

\subsection{Experiments}\label{appexp}

\paragraph{Experimental setup.}
For HELM, multiple run entries with different evaluation settings may appear for the same model-subject pair; in that case, we keep the most complete one and if several runs are tied, we select one deterministically. Then, we average within the retained entry to obtain one metric value for each model-subject pair. For BBH, if multiple entries exist for the same model-task pair, we average them to a single value. In both benchmark suites, we only consider the models that have scores on every task and drop any datasets with missing values. This yields a complete $n \times m$ matrix with tasks as rows and models as columns under default protocol $P_0$.

\paragraph{Computation and uncertainty.} For arithmetic mean and median, robustness is computed directly from the expressions in Theorems~\ref{mean} and~\ref{median}. For mean win rate and pairwise majority, all reported values are the global robustness values: for each target model we solve the integer programs in Theorems~\ref{globalwin} and~\ref{globalmaj}. Thus, each value $k_B(\phi^0,A_1)$ is deterministic, given the fixed score matrix, the model set and the aggregation rule. Uncertainty arises only when we summarize these values across target models, for instance when we give the fraction of models that can be made top-ranked after training on at most $K$ datasets. Such summaries would be too confident if every model entry were treated as an independent observation, as many entries are closely related variants from the same developer or the same Hugging Face namespace. This is why we use a bootstrap over namespaces. A namespace is the lower-case part of the model identifier before the first slash. In each bootstrap sample, we resample namespaces with replacement and include all model entries belonging to each sampled namespace. This keeps related model entries together and measures how stable the summary is to the particular set of namespaces observed in the benchmark. We use $B=10000$ bootstrap resamples; the resulting summaries are in Table~\ref{tab:robust_sum}.

\begin{table}[h]
\centering
\caption{Robustness summaries for MMLU and full BBH. Entries under $K\leq 5$ and $K\leq 10$ give the percentage of target models that can be made top-ranked by training on at most $K$ datasets, with 95\% bootstrap intervals in brackets. The final column gives the median $k$ for each rule, with the same bootstrap interval. For example, $13.64$ [$0.00$, $31.58$] means that $13.64\%$ of target models in HELM MMLU can be made top-ranked under arithmetic mean by training on at most $5$ of the $57$ subjects. The interval says that, after resampling model families, values between $0.00\%$ and $31.58\%$ remain compatible with the observed variation across models.}
\label{tab:robust_sum}
\begin{tabular}{llccc}
\toprule
Suite & Rule & $K\leq 5$ & $K\leq 10$ & Median $k$ \\
\midrule
MMLU & Arithmetic mean & $13.64$ [$0.00$, $31.58$] & $31.82$ [$15.00$, $45.83$] & $16.0$ [$13.0$, $22.0$] \\
MMLU & Median & $9.09$ [$0.00$, $18.18$] & $13.64$ [$0.00$, $31.58$] & $23.0$ [$19.0$, $28.0$] \\
MMLU & Mean win rate & $4.55$ [$0.00$, $11.54$] & $9.09$ [$0.00$, $18.18$] & $44.5$ [$41.0$, $50.0$] \\
MMLU & Pairwise majority & $4.55$ [$0.00$, $11.54$] & $9.09$ [$0.00$, $18.18$] & $29.0$ [$28.0$, $29.0$] \\
\midrule
BBH & Arithmetic mean & $0.98$ [$0.56$, $1.50$] & $19.30$ [$14.62$, $24.48$] & $13.0$ [$13.0$, $13.0$] \\
BBH & Median & $1.04$ [$0.57$, $1.65$] & $15.46$ [$11.52$, $20.05$] & $12.0$ [$12.0$, $12.0$] \\
BBH & Mean win rate & $0.49$ [$0.23$, $0.81$] & $3.35$ [$1.96$, $5.34$] & $22.0$ [$22.0$, $22.0$] \\
BBH & Pairwise majority & $0.33$ [$0.12$, $0.60$] & $6.17$ [$3.97$, $9.28$] & $12.0$ [$12.0$, $12.0$] \\
\bottomrule
\end{tabular}
\end{table}

\paragraph{Normalized robustness.} We also examine normalized robustness values, which adjust the raw number of required datasets by a rule-specific reference value capturing how much useful improvement is available to the target model. This addresses the concern that raw robustness may mix two effects: the vulnerability of the aggregation rule and the amount of room the target model has left to improve. The exact denominators are as follows. For arithmetic mean, for target models with positive mean deficit, we divide by $\text{min}\{\lceil m\Delta_\text{mean}(\phi^0,A_1)/(1-\mu_{\phi^0}(A_1))\rceil,m\}$, where $\mu_{\phi^0}(A_1)$ is the target model's mean score under the default evaluation protocol. For median, we divide by $C_{\tau(\phi^0)}(\phi^0,A_1)$, the number of datasets that can be pushed across the threshold $\tau(\phi^0)$. For mean win rate, we take a competing model $A\neq A_1$ attaining $\text{max}_{A'\neq A_1}\Delta_\text{win}(\phi^0,A_1,A')$ among competitors with positive pairwise win-rate deficit and divide by $\text{min}\{\lceil m\Delta_\text{win}(\phi^0,A_1,A)/(m^{-1}\sum_{D\in\mathcal D}q_D(A))\rceil,m\}$.  For pairwise majority, we divide by $|\{D\in\mathcal D: \exists A\in\mathcal A\setminus\{A_1\}, \Delta_\text{maj}(\phi^0,A_1,A)>0 \text{ and } D\in L^0(A)\}|$, the number of datasets on which the target model loses to at least one competitor with positive pairwise majority deficit. When the raw robustness value is zero, the normalized value is set to zero.

The results in Table~\ref{tab:norm} show that mean win rate has the largest median normalized robustness in both suites. Nonetheless, we detect a difference in the interpretation of the robustness for arithmetic mean. While its raw robustness is low in terms of the fraction of benchmark tasks that must be manipulated, its median normalized robustness is relatively high, especially on BBH. This means that its low raw robustness, discussed in Section~\ref{exp}, can be partly explained by the fact that the target model has large useful gains on some datasets; in other words, a few tasks with high gains can move the arithmetic mean substantially.  

\begin{table}[h]
\centering
\caption{Median normalized robustness values. Entries give the median of the normalized robustness values across target models. Normalized robustness divides $k_B(\phi^0,A_1)$ by a rule-specific reference value capturing the amount of useful improvement available to the target model. The second column contains values for MMLU, the third for BBH with all model entries and the fourth for BBH restricted to one model per namespace.}
\label{tab:norm}
\begin{tabular}{lccc}
\toprule
Rule & MMLU & Full BBH & BBH per namespace \\
\midrule
Arithmetic mean & $0.57$ & $0.81$ & $0.81$ \\
Median & $0.45$ & $0.50$ & $0.50$ \\
Mean win rate & $0.90$ & $0.92$ & $0.91$ \\
Pairwise majority & $0.51$ & $0.50$ & $0.50$ \\
\bottomrule
\end{tabular}
\end{table}

\paragraph{Additional results and sensitivity analyses.} BBH contains many related uploads from the same Hugging Face namespace. To check that the robustness results are not driven by large namespaces with many model variants, we repeat the analysis after keeping only the model with the highest mean score within each namespace. This reduces the target set from $4507$ model entries to $718$ namespace entries. The conclusion is unchanged: mean win rate remains the hardest rule to manipulate, with median robustness $21$ of $24$ tasks, compared with $13$ under arithmetic mean and $12$ under median and pairwise majority (see Figure~\ref{dup_ecdf} and Table~\ref{tab:dup}). 

\begin{figure}[h]
\centering
\includegraphics[width=\linewidth]{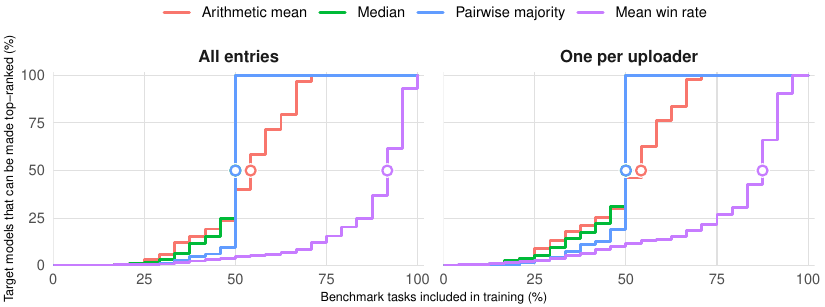}
\caption{BBH sensitivity analysis for related uploads from the same Hugging Face namespace. Treating each model in turn as the target, we compute the minimum fraction of BBH tasks on which benchmark-specific training would make the target top-ranked and plot the empirical CDF across targets. The left plot uses all $4507$ BBH model entries. The right plot keeps only the model with the highest mean score within each namespace, leaving $718$ entries. Curves farther to the right indicate greater robustness. Dots mark the median robustness for each rule.}
\label{dup_ecdf}
\end{figure}

\begin{table}[h]
\centering
\caption{BBH robustness analysis when retaining one model per Hugging Face namespace. The second column uses all $4507$ BBH model entries. The third column keeps only the model with the highest mean score in each namespace, resulting in $718$ entries. Values are median robustness values $k$ with the percentage of BBH tasks in parentheses.}
\label{tab:dup}
\begin{tabular}{lcc}
\toprule
Rule & Full BBH & One model per namespace \\
\midrule
Arithmetic mean & $13$ ($54.2\%$) & $13$ ($54.2\%$) \\
Median & $12$ ($50.0\%$) & $12$ ($50.0\%$) \\
Mean win rate & $22$ ($91.7\%$) & $21$ ($87.5\%$) \\
Pairwise majority & $12$ ($50.0\%$) & $12$ ($50.0\%$) \\
\bottomrule
\end{tabular}
\end{table}

Furthermore, because each target model is evaluated under all four aggregation rules, we use paired Wilcoxon signed-rank tests to compare rules within the same target models. The tests confirm that the differences between aggregation rules are systematic rather than driven by a few models; the results for mean win rate are in Table~\ref{tab:wilcoxon}.

\begin{table}[t]
\centering
\caption{How much more manipulation is required under mean win rate? Each entry is the median paired difference between robustness under mean
win rate and robustness under the given rule, measured in
percentage points of the benchmark and given in numbers of benchmark datasets in parentheses. Positive values mean that mean win rate is more robust and requires more datasets to be manipulated. The second column contains values for MMLU, the third for BBH with all $4507$ model entries and the fourth for BBH restricted to one model per namespace. The paired Wilcoxon signed-rank tests remain significant after Holm correction with $p_{\text{adj}}<0.001$.
}
\label{tab:wilcoxon}
\begin{tabular}{lccc}
\toprule
Baseline rule & MMLU & Full BBH & BBH per namespace \\
\midrule
Arithmetic mean
  & $44.7$ pp ($25.5$ datasets)
  & $33.3$ pp ($8$ datasets)
  & $29.2$ pp ($7$ datasets) \\
Median
  & $37.7$ pp ($21.5$ datasets)
  & $41.7$ pp ($10$ datasets)
  & $37.5$ pp ($9$ datasets) \\
Pairwise majority
  & $27.2$ pp ($15.5$ datasets)
  & $41.7$ pp ($10$ datasets)
  & $37.5$ pp ($9$ datasets) \\
\bottomrule
\end{tabular}
\end{table}

Finally, we examine whether the normalized robustness is related to the baseline strength of the target model. For each aggregation rule, we compute Spearman's $\rho$ between a model's mean score under the default protocol and its normalized robustness, as mentioned in Section~\ref{exp}. The Spearman rank correlation is negative under every rule. On MMLU, $\rho$ ranges from $-0.98$ under arithmetic mean, median and mean win rate to $-0.89$ under pairwise majority. In the BBH analysis with one model per namespace, $\rho$ is $-0.94$ under arithmetic mean and $-0.92$ under mean win rate, $-0.77$ under median and $-0.65$ under pairwise majority. The results are intuitive: models that already have high accuracy are often close to the top of the leaderboard, so benchmark-specific training may need to use only a small share of their available improvement opportunities to change their rank. Weaker models may have more room to improve, but they also start farther from the top and hence require a larger share of that room to be used. The manipulation risk is therefore concentrated among models that are already competitive on the benchmark, since improvements on relatively few or especially consequential tasks can be enough to affect the top of the leaderboard ranking. More robust aggregation rules such as mean win rate raise the cost of manipulation by requiring improvements across a broader set of task comparisons, but they do not by themselves remove the underlying rank incentive: when a model is already competitive, even a relatively small number of strategically chosen improvements may still be enough to change its leaderboard position. 

\paragraph{Compute resources and software.}
All experiments were run on a local MacBook Pro (macOS 15.7.3) with an Apple M3 chip, 8 CPU cores and 24 GB memory, using CPU computation only and no GPU or cloud resources. The full robustness computation, bootstrap and statistical tests took approximately 1.5--2 hours. The experiments were run with R 4.4.2 and the R packages loaded in the released code.

\paragraph{Existing assets and licenses.}
We use only existing public benchmark scores. MMLU is credited to \cite{Hendrycks2021} and is distributed under the MIT License; HELM is credited through \cite{liang2023} and is distributed under Apache-2.0. BBH is credited to \cite{suzgun2022challengingbigbenchtaskschainofthought} and its public BIG-Bench-Hard repository is distributed under MIT; the BBH leaderboard scores are credited to the Hugging Face Open LLM Leaderboard and were produced with the EleutherAI Evaluation Harness \citep{eval-harness}, which is distributed under MIT. The Open LLM Leaderboard results are public Hugging Face Hub datasets, so we use them under the Hugging Face Hub Terms of Service and use only aggregate public leaderboard scores. The R code uses \texttt{jsonlite}, \texttt{dplyr}, \texttt{tidyr} and \texttt{tibble} under MIT, \texttt{digest} under GPL (>=2), and \texttt{lpSolve} under LGPL-2.

\subsection{Broader Impact}\label{impact}
This work is intended to improve the reliability and transparency of model evaluation by giving benchmark designers an exact way to measure how robust a fixed leaderboard is to benchmark-specific training. More robust aggregation can make leaderboard rank a less misleading signal of model capability and can support safer decisions about model comparison and deployment. A possible negative impact is that the same analysis could be read by model developers as information about how much benchmark-specific training is needed to rig a leaderboard. We mitigate this by focusing on aggregate robustness rather than task-level instructions for a particular target model.



\end{document}